\documentclass[final]{colt2026}

\usepackage{amsmath, amssymb, mathtools}
\usepackage{mathrsfs} 
\usepackage{enumitem}
\usepackage{times}

\newtheorem{assumption}[theorem]{Assumption}

\newcommand{\R}{\mathbb{R}}
\newcommand{\E}{\mathbb{E}}
\newcommand{\Pbb}{\mathbb{P}}
\newcommand{\F}{\mathcal{F}}
\newcommand{\StateSpace}{\mathcal{S}}
\newcommand{\Action}{\mathcal{A}}
\newcommand{\Qstar}{Q^*}
\newcommand{\Bellman}{\mathcal{T}}
\newcommand{\OpNet}{\mathcal{N}}

\newcommand{\Tr}{\text{Tr}}
\newcommand{\ind}{\mathbf{1}}

\title[Deep Q-Learning on Holder Spaces]{Deep Q-Learning on H\"older Spaces}

\coltauthor{%
 \Name{Qian Qi} \Email{qiqian@pku.edu.cn}\\
 \addr Peking University%
}

\begin{document}

\maketitle
\begingroup
\renewcommand{\thefootnote}{\fnsymbol{footnote}}
\footnotetext[1]{Accepted for presentation at the Conference on Learning Theory (COLT) 2026.}
\endgroup

\begin{abstract}%
  We study the operator-theoretic core of Q-learning in continuous-time stochastic control with continuous states and actions. In value-based reinforcement learning, each Q-learning or DQN update is built from a Bellman optimality target; our analysis isolates this target in a diffusion setting and studies its regularity and approximation complexity. Under uniform ellipticity and H\"older-regular coefficients, we show that a Bellman update maps bounded inputs into an anisotropic regularity class, smoothing the state variable while leaving only Lipschitz dependence on the action variable. This yields a compact family of Bellman iterates and motivates a tensor-product DeepONet architecture adapted to the mixed regularity of the problem. We then derive explicit approximation and resource bounds, together with a stiffness--complexity trade-off as the time step $\delta \to 0$. The resulting theory makes a direct contribution to Q-learning theory at the level of Bellman target regularity and approximation in continuous stochastic control. At the same time, we do not claim a full convergence theorem for practical sampled Q-learning with exploration, replay, and stochastic gradient updates.
\end{abstract}

\begin{keywords}%
  Q-Learning Theory, Bellman Value Iteration, Neural Operators, Parabolic Smoothing, Stochastic Optimal Control, Approximation Theory.
\end{keywords}

\section{Introduction}

This paper studies the operator-theoretic core of Q-learning in continuous-time stochastic control with continuous states and actions. In value-based reinforcement learning, the central object is the Bellman optimality update for the action-value function; every Q-learning or DQN step is, at heart, an attempt to approximate this update from data. Our focus is on this Bellman target itself in a continuous-time diffusion model, where the coefficients are known and the operator can be analyzed directly. This does not yet cover all ingredients of practical Q-learning, but it does address what we view as one of its core theoretical questions: what regularity and approximation complexity does the Bellman target class possess?

The relationship with Q-learning can be stated concretely. In discrete time, a typical value-based target has the form $r + \beta \max_{a'} Q(x',a')$, namely a stochastic approximation to a Bellman optimality operator. Our continuous-time discretization replaces the single sampled transition by its conditional expectation over a short diffusion step of size $\delta$, thereby isolating the operator-level part of the theory from the separate issues of exploration, replay, and stochastic optimization. This is why we view the paper as a contribution to Q-learning theory, while also being explicit that it operates at the Bellman-operator level.

Classical analyses of value-based methods with function approximation often assume that the transition and reward coefficients are globally Lipschitz in state and action. While convenient, this hypothesis misses a central feature of stochastic control: non-degenerate diffusion can regularize the Bellman update through parabolic smoothing. In PDE language, the Bellman step is tied to a controlled parabolic equation, so rough inputs may become more regular after one application of the operator \citep{krylov1987nonlinear}.

Motivated by this observation, we study the time-discretized Bellman optimality operator $\Bellman_\delta$ under uniform ellipticity and H\"older-regular coefficients. The main message is that the Bellman targets underlying value-based learning in this regime lie in an anisotropic regularity class substantially smaller than a generic Lipschitz class. This creates an approximation-theoretic route toward neural architectures tailored to the regularity of continuous-control Bellman iterates, and hence toward a sharper theory of what Q-learning-style methods are trying to approximate in this setting.

\subsection{Main Contributions}

Our contributions are fourfold. First, we prove that under uniform ellipticity the Bellman operator maps bounded inputs into an anisotropic H\"older class: smooth in the state variable and only Lipschitz in the action variable. This identifies a nontrivial regularity structure for the Bellman targets that arise in continuous-time Q-learning problems.

Second, we show that this regularity statement is already a core theoretical contribution to Q-learning: it describes the target class that any value-function approximator must represent in this diffusion regime. The operator-theoretic viewpoint is therefore not peripheral to Q-learning, but rather isolates its mathematical backbone.

Third, we use this regularity to derive approximation guarantees for a tensor-product DeepONet architecture that approximates Bellman iterates. The architecture is chosen because it matches the anisotropic regularity revealed by the Bellman analysis; the regularity statement itself is architecture-independent, while the theorem provides one constructive approximation route.

Fourth, we quantify the stiffness--complexity trade-off induced by the time step $\delta$. As $\delta \to 0$, the Bellman operator approaches the identity and the smoothing effect weakens, which increases the approximation complexity. This explains why continuous-time limits can become numerically stiff even when the diffusion is regularizing.

Overall, the paper aims to make a precise claim: in continuous stochastic control with uniformly elliptic diffusion, the Bellman targets at the heart of Q-learning enjoy exploitable anisotropic regularity, and this has concrete consequences for approximation theory and architecture design. What we do \emph{not} claim is a full convergence theorem for sampled Q-learning with exploration, replay, target-network heuristics, and stochastic gradient optimization.

\subsection{Literature Review}

Our work sits at the intersection of approximation theory for Deep Learning, stochastic analysis of PDEs, and reinforcement learning. We briefly survey the relevant literature in these domains.

The convergence of Q-learning with function approximation has been a central topic since the early work of \citet{tsitsiklis1997analysis}. The empirical success of DQN was popularized by \citet{mnih2015human}, while rigorous analyses of value-based methods with function approximation often impose strong smoothness or contraction assumptions; see, for example, \citet{fan2020theoretical}. Our work addresses a complementary but, in our view, foundational theoretical question: what regularity and approximation complexity do the Bellman targets themselves possess in a continuous-time stochastic-control regime? Because Q-learning and DQN are built around repeated approximation of Bellman optimality targets, understanding this target class is already a direct contribution to Q-learning theory, even before one adds finite-sample and optimization analysis. The observations of \citet{tallec2019making} about stiffness as $\delta \to 0$, of \citet{qian2025} about regularity collapse near decision boundaries, and of \citet{qi2026viscosity} about viscosity-consistency constraints for deep Q-network discretizations help motivate our complexity discussion. At the same time, to avoid overstatement, we emphasize that our theorem is an operator-level approximation result for Bellman iterates in a diffusion setting, not a full convergence guarantee for practical sampled Q-learning training dynamics.

The correspondence between the Bellman equation and parabolic PDEs is classical, established by \citet{krylov1980controlled} and \citet{fleming2006controlled}. While the existence of viscosity solutions for HJB equations is well-known \citep{crandall1992user}, the application of \emph{interior regularity estimates} to analyze the sample complexity of RL is relatively new. Most existing finite-sample bounds for RL \citep{azar2017minimax, jin2018q} operate in the tabular or linear setting. In contrast, our work explicitly utilizes the anisotropic nature of the Schauder estimates ($C^{2,\alpha}$ in state, $C^{0,1}$ in action) induced by the diffusion term $\Tr(\sigma \sigma^\top D^2 V)$, a concept deeply rooted in the theory of regularization by noise.

Universal Approximation Theorems (UATs) for neural networks have evolved from qualitative density results \citep{cybenko1989approximation, hornik1991approximation} to quantitative complexity bounds. \citet{yarotsky2017error} established the relationship between network depth/width and the Sobolev regularity of the target function, showing that $W \sim \epsilon^{-d/s}$. However, applying these isotropic bounds to $\Qstar$ yields pessimistic estimates due to the joint dimension $d_{\text{state}} + d_{\text{action}}$. Our analysis circumvents this by adopting the anisotropic viewpoint of \citet{mhaskar2016deep} and \citet{poggio2017and}, decoupling the complexity of the smooth state manifold from the non-smooth policy landscape.

The architecture proposed in this paper draws upon the emerging field of Operator Learning. The \emph{Deep Operator Network} (DeepONet), introduced by \citet{lu2021learning}, generalizes the universal approximation theorem to non-linear operators on Banach spaces. Similarly, the \emph{Fourier Neural Operator} (FNO) \citep{li2020fourier} has shown success in solving PDEs by learning the solution operator directly. While these methods are primarily applied to forward problems in fluid dynamics and solid mechanics, we adapt the framework to the backward Bellman flow. Our \emph{Tensor-Product Trunk} modification addresses the specific mixed-regularity requirements of the Control problem, which are not typically present in physical conservation laws.

\section{Preliminaries and Problem Formulation}

\subsection{The Stochastic Control Problem}

Let $(\Omega, \F, \mathbb{F}=\{\F_s\}_{s \ge 0}, \Pbb)$ be a complete filtered probability space satisfying the usual conditions, supporting a standard $d$-dimensional Brownian motion $W = (W_s)_{s \ge 0}$. We consider a finite-horizon Markov Decision Process over the time interval $t \in [0, T]$. The state process $X_s \in \R^d$ is controlled by an action process $a = (a_s)_{s \in [t, T]}$ taking values in a compact metric space $\Action \subset \R^{d_a}$, where $d_a$ denotes the action dimension. We denote the set of admissible control policies by $\mathcal{U}[t, T]$, defined as the space of all $\mathbb{F}$-progressively measurable processes taking values in $\Action$ such that the associated SDE has a unique strong solution.

For a fixed initial time $t \in [0, T)$ and state $x \in \R^d$, the dynamics of the system are governed by the forward It\^o Stochastic Differential Equation (SDE):
\begin{equation} \label{eq:sde}
    \begin{cases}
        dX_s^{t,x,a} = b(s, X_s^{t,x,a}, a_s) \, ds + \sigma(s, X_s^{t,x,a}, a_s) \, dW_s, \quad s \in [t, T], \\
        X_t^{t,x,a} = x.
    \end{cases}
\end{equation}
The objective of the agent is to maximize the expected cumulative reward defined by the cost functional:
\begin{equation} \label{eq:objective}
    J(t, x, a) = \E \left[ \int_t^T r(s, X_s^{t,x,a}, a_s) \, ds + g(X_T^{t,x,a}) \bigg| \F_t \right].
\end{equation}
We assume the drift $b: [0,T] \times \R^d \times \Action \to \R^d$, diffusion $\sigma: [0,T] \times \R^d \times \Action \to \R^{d \times d}$, running reward $r: [0,T] \times \R^d \times \Action \to \R$, and terminal reward $g: \R^d \to \R$ satisfy standard measurability and boundedness conditions (detailed in Assumption \ref{ass:coeff}).

The optimal value function $V^*(t, x)$ is defined via the maximization over admissible controls:
\begin{equation}
    V^*(t, x) = \sup_{a \in \mathcal{U}[t, T]} J(t, x, a).
\end{equation}
It is well-known from classical stochastic control theory \citep{krylov1980controlled} that, under suitable regularity conditions, $V^*$ is the unique viscosity solution to the backward Hamilton-Jacobi-Bellman (HJB) equation:
\begin{equation} \label{eq:HJB_continuous}
    \begin{cases}
        -\partial_t V^*(t,x) - \sup_{a \in \Action} \left\{ \mathcal{L}^a V^*(t,x) + r(t, x, a) \right\} = 0, \quad (t,x) \in [0, T) \times \R^d, \\
        V^*(T, x) = g(x),
    \end{cases}
\end{equation}
where $\mathcal{L}^a$ is the second-order linear differential operator (generator) associated with a constant action $a$:
\begin{equation} \label{eq:generator}
    \mathcal{L}^a \phi(x) \coloneqq \frac{1}{2} \Tr\left( \sigma(t,x,a)\sigma(t,x,a)^\top D^2_x \phi(x) \right) + b(t,x,a)^\top \nabla_x \phi(x).
\end{equation}
In the present idealized Bellman-iteration model, we do not analyze data-driven Q-learning or exploration directly. Instead, we approximate the \emph{action-value function} $Q^*(t,x,a)$ associated with the semi-discrete Bellman operator obtained by holding an action constant for a time step $\delta$ and acting optimally thereafter.

\subsection{Assumptions: Ellipticity and Regularity}

To invoke the Schauder interior regularity estimates, we require specific structural assumptions on the MDP. A critical distinction in our analysis is the separation between regularity in the state-time domain $(t,x)$, where diffusion generates smoothing, and the action space $\Action$, where no such smoothing occurs.

We first define the \emph{parabolic metric} on $[0,T] \times \R^d$:
\begin{equation}
    d_P((t,x), (s,y)) \coloneqq |t-s|^{1/2} + \|x-y\|.
\end{equation}
A function $f: [0,T] \times \R^d \to \R$ is said to be \textbf{parabolic H\"older continuous} with exponent $\alpha \in (0,1)$, denoted $f \in C^{\alpha/2, \alpha}([0,T] \times \R^d)$, if:
\begin{equation}
    \|f\|_{C^{\alpha/2, \alpha}} \coloneqq \sup_{(t,x)} |f(t,x)| + \sup_{(t,x) \ne (s,y)} \frac{|f(t,x) - f(s,y)|}{d_P((t,x), (s,y))^\alpha} < \infty.
\end{equation}
This scaling reflects the natural heat-kernel behavior where time scales as the square of space.

\begin{assumption}[Global Coefficients] \label{ass:coeff}
    The drift $b$, diffusion $\sigma$, and reward $r$ satisfy the following conditions on $[0,T] \times \R^d \times \Action$:
    \begin{enumerate}[label=(\roman*),wide]
        \item \textbf{Global Boundedness and H\"older Continuity:} 
        The functions $b, \sigma, r$ are bounded and uniformly H\"older continuous with exponent $\alpha \in (0,1)$ in the parabolic metric on the entire domain $\mathcal{D}_T = [0,T] \times \R^d$. They are also globally Lipschitz in the state variable, uniformly over time and actions, so that the controlled SDE and the flow-sensitivity estimates used below are well posed. Specifically, there exists a constant $K$ such that for all $a \in \Action$:
        \begin{equation}
            \|b(\cdot, \cdot, a)\|_{C^{\alpha/2, \alpha}(\mathcal{D}_T)} + \|\sigma(\cdot, \cdot, a)\|_{C^{\alpha/2, \alpha}(\mathcal{D}_T)} + \|r(\cdot, \cdot, a)\|_{C^{\alpha/2, \alpha}(\mathcal{D}_T)} \le K.
        \end{equation}
        The terminal payoff $g$ is bounded and globally Lipschitz in $x$.
        \item \textbf{Uniform Ellipticity on $\R^d$:} 
        There exists a constant $\nu > 0$ such that for all $(t,x,a) \in [0,T] \times \R^d \times \Action$ and $\xi \in \R^d$:
        \begin{equation}
            \xi^\top (\sigma \sigma^\top)(t,x,a) \xi \ge \nu \|\xi\|^2.
        \end{equation}
        \item \textbf{Lipschitz Continuity in Actions:} 
        The coefficients are globally Lipschitz continuous with respect to the action parameter $a$, uniformly in $(t,x)$.
    \end{enumerate}
\end{assumption}

\begin{remark}[Domain Definition]
    By assuming the coefficients are well-behaved on $\R^d$, we treat the Bellman update as a Cauchy problem. This avoids the technical complications of boundary conditions (e.g., reflecting or absorbing boundaries) on a compact domain $\Omega$, where interior estimates would diverge as $dist(x, \partial \Omega) \to 0$. The compactness required for the approximation theorem will be recovered via the restriction of the global solution to compact reachable sets.
\end{remark}

\begin{assumption}[Uniform Ellipticity] \label{ass:ellipticity}
    The diffusion is uniformly non-degenerate; this repeats the ellipticity condition separately because it is the assumption invoked directly by the PDE estimates. Let $\Sigma(t,x,a) \coloneqq \sigma(t,x,a)\sigma(t,x,a)^\top$. There exists a constant $\nu > 0$ such that for all $(t,x,a) \in [0,T] \times \R^d \times \Action$ and all $\xi \in \R^d$:
    \begin{equation}
        \xi^\top \Sigma(t,x,a) \xi \ge \nu \|\xi\|^2.
    \end{equation}
\end{assumption}

\begin{remark}[Necessity of Assumptions]
    Assumption \ref{ass:coeff}(i) ensures that the coefficients possess sufficient regularity to support classical solutions to the parabolic PDE. If the coefficients were merely bounded measurable, solutions would only exist in Sobolev spaces $W^{1,2}_p$ \citep{krylov1980controlled}, which is insufficient for the pointwise gradient estimates required for our operator network bounds. Assumption \ref{ass:ellipticity} prevents the diffusions from becoming singular, ensuring the existence of a smooth Gaussian transition density (heat kernel) which drives the regularization mechanism.
\end{remark}

\begin{example}[A controlled diffusion satisfying the assumptions]
A concrete example is a bounded stochastic tracking problem with compact action set $\Action \subset \mathbb{R}^{d_a}$:
\[
    dX_s = \bigl(\tanh(A X_s) + B a_s + \beta(s)\bigr)\,ds + \Sigma_0\,dW_s,
\]
where $A$ and $B$ are fixed matrices, $\beta$ is bounded and smooth, and $\Sigma_0\Sigma_0^\top \ge \nu I_d$ for some $\nu>0$. Let the running reward and terminal payoff be bounded H\"older-continuous functions, for instance
\[
    r(t,x,a) = \bar r\bigl(t,\tanh(x),a\bigr), \qquad g(x)=\bar g(\tanh(x)),
\]
with $\bar r$ and $\bar g$ smooth and bounded and with $\Action$ compact. Then $b$, $\sigma$, and $r$ satisfy Assumption~\ref{ass:coeff}, Assumption~\ref{ass:ellipticity} holds by construction, and the model retains the key features relevant to our setting: continuous states, continuous actions, and genuinely stochastic transitions. The example should be read as a bounded-coefficient proxy for continuous-control problems in which diffusion regularizes the Bellman update.
\end{example}

\subsection{Anisotropic Function Spaces}

Standard isotropic function spaces are ill-suited for the analysis of the Bellman operator in stochastic control. The diffusion term $\sigma \sigma^\top$ provides parabolic smoothing in the state-time domain $(t,x)$, but the dependence on the action $a$ is merely parametric. The optimal policy, and consequently the value function, may exhibit non-smooth behavior with respect to $a$ (e.g., kinks at points where the optimal action switches). To rigorously capture this mixed regularity, we introduce \emph{anisotropic H\"older spaces}.

Let $\mathcal{D}_T = [0,T] \times \R^d$. We denote by $C^{1,2,\alpha}(\mathcal{D}_T)$ the classical parabolic H\"older space consisting of functions $u(t,x)$ such that $\partial_t u$, $\nabla_x u$, and $D^2_x u$ are continuous and bounded, and the highest-order derivatives satisfy the $\alpha$-H\"older condition with respect to the parabolic metric $d_P$. The norm is defined as:
\begin{equation}
    \|u\|_{C^{1,2,\alpha}(\mathcal{D}_T)} \coloneqq \|u\|_\infty + \|\partial_t u\|_{C^{\alpha/2, \alpha}} + \|\nabla_x u\|_{C^{(1+\alpha)/2, 1+\alpha}} + \|D^2_x u\|_{C^{\alpha/2, \alpha}}.
\end{equation}
Here, the H\"older seminorms are defined with respect to $d_P((t,x), (s,y)) = |t-s|^{1/2} + \|x-y\|$. This definition ensures that if $u \in C^{1,2,\alpha}$, it satisfies the requisite regularity for the classical Schauder estimates applied to $\partial_t u + \mathcal{L}^a u = f$.

\begin{definition}[Anisotropic Space $\mathcal{H}^{\alpha, \text{Lip}}$]
    A function $Q: [0,T] \times \R^d \times \Action \to \R$ belongs to the anisotropic space $\mathcal{H}^{\alpha, \text{Lip}}$ if:
    \begin{enumerate}[label=(\roman*),wide]
        \item \textbf{Smoothing Domain:} For every fixed action $a \in \Action$, the map $(t,x) \mapsto Q(t,x,a)$ belongs to the parabolic space $C^{1,2,\alpha}(\mathcal{D}_T)$.
        \item \textbf{Parametric Domain:} For every fixed $(t,x) \in \mathcal{D}_T$, the map $a \mapsto Q(t,x,a)$ is Lipschitz continuous (i.e., belongs to $C^{0,1}(\Action)$).
    \end{enumerate}
    The associated norm is:
    \begin{equation}
        \|Q\|_{\mathcal{H}^{\alpha, \text{Lip}}} \coloneqq \sup_{a \in \Action} \|Q(\cdot, \cdot, a)\|_{C^{1,2,\alpha}(\mathcal{D}_T)} + \sup_{(t,x) \in \mathcal{D}_T} \sup_{a \ne a'} \frac{|Q(t,x,a) - Q(t,x,a')|}{\|a-a'\|_\Action}.
    \end{equation}
\end{definition}

This construction is critical for our main result. While the Bellman operator lifts the regularity of the value function in state space (from $L^\infty$ to $C^{2,\alpha}$), it preserves (but does not improve) the regularity with respect to actions. Since the coefficients $b, \sigma, r$ are assumed to be Lipschitz in $a$ (Assumption \ref{ass:coeff}), the solution $Q$ naturally resides in $\mathcal{H}^{\alpha, \text{Lip}}$.

\subsection{The Bellman Semi-Group}

The central object of our analysis is the evolution of the action-value function under the optimal control dynamics. Unlike the infinite-horizon setting which relies on contraction properties of a static operator, the finite-horizon setting is naturally described by a backward parabolic evolution.

Let $\mathcal{L}^a_t$ denote the infinitesimal generator associated with a fixed action $a \in \Action$ at time $t$, acting on functions $\phi \in C^2(\R^d)$:
\begin{equation} \label{eq:generator_def}
    \mathcal{L}^a_t \phi(x) \coloneqq \frac{1}{2} \Tr\left( \sigma(t,x,a)\sigma(t,x,a)^\top D^2_x \phi(x) \right) + b(t,x,a)^\top \nabla_x \phi(x).
\end{equation}

We define the discrete-time backward Bellman operator $\Bellman_\delta$ which maps a terminal value function approximation at time $t+\delta$ to a value function at time $t$. Let $\Phi: \R^d \times \Action \to \R$ be a bounded measurable function representing the Q-function at the future time step $t+\delta$. The update consists of two stages:

\begin{enumerate}[label=(\roman*),wide]
    \item \textbf{Non-linear Maximization (Optimization Step):} We first compute the optimal value at the future state:
    \begin{equation}
        V_{\Phi}(y) \coloneqq \sup_{a' \in \Action} \Phi(y, a').
    \end{equation}
    Note that even if $\Phi$ is smooth, $V_{\Phi}$ is typically only Lipschitz continuous due to the supremum operation.
    
    \item \textbf{Linear Evolution (Smoothing Step):} We solve the backward Kolmogorov equation (Feynman-Kac formula) over the interval $[t, t+\delta)$. For a fixed current action $a$, let $u(\cdot, \cdot; a)$ be the solution to the linear parabolic PDE:
    \begin{equation} \label{eq:backward_pde}
        \begin{cases}
            \partial_s u(s, x) + \mathcal{L}^a_s u(s, x) + r(s, x, a) = 0, & s \in [t, t+\delta), \\
            u(t+\delta, x) = V_{\Phi}(x).
        \end{cases}
    \end{equation}
\end{enumerate}

The Bellman Operator $\Bellman_{t, \delta}$ is defined as the evaluation of this solution at the initial time $s=t$:
\begin{equation} \label{eq:bellman_def}
    (\Bellman_{t, \delta} \Phi)(x, a) \coloneqq u(t, x; a).
\end{equation}

\begin{remark}[Probabilistic Interpretation]
    Under the uniform ellipticity condition (Assumption \ref{ass:ellipticity}), there exists a smooth transition density function (fundamental solution) $\Gamma^a(s, x; \tau, y)$ associated with the operator $\partial_s + \mathcal{L}^a_s$. The operator admits the integral representation:
    \begin{equation}
        (\Bellman_{t, \delta} \Phi)(x, a) = \int_{\R^d} \Gamma^a(t,x; t+\delta, y) V_{\Phi}(y) \, dy + \int_t^{t+\delta} \int_{\R^d} \Gamma^a(t,x; s, y) r(s,y,a) \, dy \, ds.
    \end{equation}
    The first term represents the diffusion of the terminal value (convolution with a heat kernel), while the second term accumulates the running reward. It is the smoothing property of the kernel $\Gamma^a$, specifically its Gaussian decay and singularity behavior, that allows us to recover $C^{2,\alpha}$ regularity for $u(t, \cdot)$ even when the terminal data $V_{\Phi}$ is non-differentiable.
\end{remark}

\section{Regularity of the Bellman Operator}\label{sec:Regularity}

In this section, we establish the smoothing properties of the Bellman operator. Unlike standard RL theory which relies on the contraction property to preserve Lipschitz continuity, we utilize the regularizing effect of the diffusion term $\sigma \sigma^\top$.

We analyze the single-step backward operator $\Bellman_\delta$ defined in \eqref{eq:bellman_def}. This operator can be decomposed into two distinct stages:
\begin{enumerate}[label=(\roman*),wide]
    \item \textbf{Non-linear Maximization:} $V(x) = \sup_{a' \in \Action} \Phi(x, a')$. This step generally \emph{destroys} differentiability, even if $\Phi$ is smooth.
    \item \textbf{Linear Smoothing (Convolution):} The evolution of $V(x)$ backwards in time via the parabolic PDE. This step \emph{restores} differentiability with respect to $x$.
\end{enumerate}

\subsection{Interior Schauder Estimates}\label{sec:Interior}

We analyze the regularity of the single-step Bellman update by casting it as a solution to a linear parabolic Cauchy problem. A critical improvement over standard analysis arises from the observation that while the maximization step destroys differentiability (preventing $C^1$ propagation), it preserves Lipschitz continuity. We leverage this to derive tighter smoothing estimates than those available for generic bounded measurable initial data.

Let $a \in \Action$ be a fixed action. Let $\Phi$ be a bounded Q-function approximation at the next time step, uniformly Lipschitz in the state variable as in Theorem \ref{thm:smoothing}. The update involves first computing the optimal value at the future state:
\begin{equation}
    V_{\Phi}(x) \coloneqq \sup_{a' \in \Action} \Phi(x, a').
\end{equation}
We emphasize that due to the supremum operator, $V_{\Phi}$ typically lacks differentiability (it is generally only Lipschitz continuous even if $\Phi$ is smooth). This roughness prevents the existence of classical solutions at $\tau=0$. However, the diffusion term induces immediate smoothing for $\tau > 0$.

Let $v(\tau, x) \coloneqq (\Bellman_{t+\delta-\tau, \tau} \Phi)(x, a)$. Then $v$ satisfies the forward parabolic PDE with rough initial data:
\begin{equation} \label{eq:pde_forward}
    \begin{cases}
    \partial_\tau v(\tau, x) = \mathcal{L}^a v(\tau, x) + r(t+\delta-\tau, x, a) & \text{in } (0, \delta] \times \R^d, \\
    v(0, x) = V_{\Phi}(x).
    \end{cases}
\end{equation}
Here, the operator $\mathcal{L}^a$ acts on the spatial variable $x$. Under Assumption \ref{ass:ellipticity} (Uniform Ellipticity) and Assumption \ref{ass:coeff} (H\"older continuous coefficients), the operator $\partial_\tau - \mathcal{L}^a$ is strictly parabolic. We invoke interior Schauder estimates to bound the solution at time $\tau = \delta$.

\begin{theorem}[Refined Interior Parabolic Smoothing] \label{thm:smoothing}
    Grant Assumptions \ref{ass:coeff} and \ref{ass:ellipticity}. Let the input function $\Phi$ be uniformly Lipschitz continuous in the state variable, i.e., $\sup_{a' \in \Action} \|\Phi(\cdot, a')\|_{C^{0,1}(\R^d)} \le L_{\Phi}$.
    
    Then, for any fixed $\delta > 0$, the function $x \mapsto (\Bellman_\delta \Phi)(t, x, a)$ belongs to the H\"older space $C^{2,\alpha}(\R^d)$. Moreover, there exists a constant $C > 0$ (depending only on $d$, $\nu$, and coefficient bounds $K$) such that the following tighter smoothing estimate holds:
    \begin{equation} \label{eq:schauder_bound}
        \| (\Bellman_\delta \Phi)(t, \cdot, a) \|_{C^{2,\alpha}(\R^d)} \le C \left( \|r\|_{C^{\alpha/2,\alpha}} + \delta^{-\frac{1+\alpha}{2}} \sup_{a' \in \Action} \|\Phi(\cdot, a')\|_{C^{0,1}(\R^d)} \right).
    \end{equation}
    Critically, the singularity as $\delta \to 0$ scales as $\mathcal{O}(\delta^{-\frac{1+\alpha}{2}})$, which is strictly milder than the $\mathcal{O}(\delta^{-\frac{2+\alpha}{2}})$ scaling associated with discontinuous initial data.
\end{theorem}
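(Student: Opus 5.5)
We reduce the estimate to a classical linear Schauder problem for the forward equation \eqref{eq:pde_forward}, with Lipschitz rather than merely bounded initial data. First, the maximization step costs no regularity at the Lipschitz level: since $|\sup_{a'}\Phi(x,a')-\sup_{a'}\Phi(y,a')|\le \sup_{a'}|\Phi(x,a')-\Phi(y,a')|$, we get $\|V_\Phi\|_{C^{0,1}(\R^d)}\le \sup_{a'}\|\Phi(\cdot,a')\|_{C^{0,1}}\le L_\Phi$, and similarly $\|V_\Phi\|_\infty\le \sup_{a'}\|\Phi(\cdot,a')\|_\infty$. After this step the action $a$ is frozen and $V_\Phi$ enters only as initial data. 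So it suffices to prove, for the linear problem $\partial_\tau v=\mathcal{L}^a v+f$ with $v(0)=\psi$, that
\begin{equation*}
\|v(\delta,\cdot)\|_{C^{2,\alpha}}\le C\bigl(\|f\|_{C^{\alpha/2,\alpha}}+\delta^{-(1+\alpha)/2}\|\psi\|_{C^{0,1}}\bigr),
\end{equation*}
with $C$ depending only on $d,\nu,K$ (and $T$, which we allow to enter through $\delta\le T$).

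By linearity we split $v=v_1+v_2$. Here $v_1$ solves the homogeneous problem with data $\psi=V_\Phi$, and $v_2$ solves the inhomogeneous problem with zero data and source $r(t+\delta-\tau,\cdot,a)$.

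For $v_2$ the source is parabolic $C^{\alpha/2,\alpha}$ with norm at most $K$ uniformly in $a$. Global Schauder theory for the Cauchy problem (Krylov, Friedman, Ladyzhenskaya et al.) then gives $\|v_2\|_{C^{1,2,\alpha}((0,\delta]\times\R^d)}\le C\|r\|_{C^{\alpha/2,\alpha}}$ with no singular factor in $\delta$, because the initial datum is zero and thus compatible.

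For $v_1$ we use the parametrix representation $v_1(\delta,x)=\int\Gamma^a(0,x;\delta,y)\psi(y)\,dy$ together with the standard Gaussian derivative bounds $|D_x^k\Gamma^a|\le C\delta^{-k/2}g_{c\delta}(x-y)$ and their H\"older versions, valid for $k\le 2$ under Assumptions \ref{ass:coeff}--\ref{ass:ellipticity}. The Lipschitz gain comes from subtracting a constant, since $\int\Gamma^a\,dy=1$ when there is no potential term:
\begin{equation*}
D^2_x v_1(\delta,x)=\int D_x^2\Gamma^a(0,x;\delta,y)\bigl(\psi(y)-\psi(x)\bigr)\,dy .
\end{equation*}
Using $|\psi(y)-\psi(x)|\le L\|x-y\|$ and $\int\|x-y\|g_{c\delta}\,dy\lesssim\delta^{1/2}$, this gives $\|D^2v_1(\delta)\|_\infty\lesssim\delta^{-1/2}L$. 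The same cancellation applied to the H\"older difference quotient of $D^2_x\Gamma^a$ (which costs $\delta^{-(2+\alpha)/2}$ and gains $\delta^{1/2}$) yields $[D^2v_1(\delta)]_\alpha\lesssim\delta^{-(1+\alpha)/2}L$. Lower-order terms are bounded similarly, with milder powers, and for $\delta\le T$ these are absorbed into $\delta^{-(1+\alpha)/2}$.

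An alternative route for $v_1$ is a scaling argument. Rescale $x\mapsto x/\sqrt\delta$ around each point, subtract $\psi(x_0)$, apply the interior Schauder estimate on a unit parabolic cylinder (with data oscillation $\lesssim L\sqrt\delta$), and rescale back. This recovers exactly the exponent $-(2+\alpha)/2+1/2$.

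The main obstacle is making the kernel step rigorous with only H\"older (non-smooth) coefficients. The derivative bounds on $\Gamma^a$, and especially the H\"older continuity of $D_x^2\Gamma^a$ in $x$ at the rate $\delta^{-(2+\alpha)/2}$, must come from the Levi parametrix construction, and we must check that the subtraction trick survives the parametrix correction terms. In the rescaled route the same issue appears as the requirement that the rescaled coefficients stay in a bounded Schauder class uniformly in $\delta$, which holds because rescaling only improves their H\"older seminorms when $\delta\le1$.

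I would try the scaling/interior-Schauder route first, since it uses only the cited Schauder estimates as a black box. I would fall back on explicit parametrix bounds only if the uniformity of constants in $a$ needs to be tracked more carefully.
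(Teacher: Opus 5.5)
Your proposal is correct, and its skeleton matches the paper's. The maximization step transfers the Lipschitz bound to $V_\Phi$. A Duhamel split separates the homogeneous part with data $V_\Phi$ from the zero-data part with source $r$, and the latter is controlled by global Schauder theory with no singular factor in $\delta$.

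Where you differ is the key estimate $\|v_1(\delta,\cdot)\|_{C^{2,\alpha}}\lesssim\delta^{-(1+\alpha)/2}\|V_\Phi\|_{C^{0,1}}$. The paper obtains it by index counting. It views $V_\Phi$ as an element of $B^1_{\infty,\infty}$ and invokes the heat-semigroup rule that lifting regularity from index $1$ to index $2+\alpha$ costs $\delta^{-(1+\alpha)/2}$. It then simply asserts that the non-autonomous evolution family $U^a$ with H\"older coefficients obeys the same rule. That assertion can be made rigorous by interpolating the endpoint bounds $\|U^a(\delta,0)\|_{C_b\to C^{2,\alpha}}\lesssim\delta^{-(2+\alpha)/2}$ and $\|U^a(\delta,0)\|_{C^{2,\alpha}\to C^{2,\alpha}}\lesssim 1$ at $\theta=1/(2+\alpha)$, using $C^{0,1}\hookrightarrow(C_b,C^{2,\alpha})_{\theta,\infty}$.

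You instead prove the estimate directly, in one of two ways:
\begin{itemize}
\item from the Gaussian bounds on $D_x^2\Gamma^a$ and its $x$-H\"older modulus, via the cancellation $\int D_x^2\Gamma^a\,dy=0$;
\item from interior Schauder estimates plus parabolic rescaling.
\end{itemize}
The paper's route is a one-line appeal to a general smoothing principle. Yours is a self-contained verification of that principle in the case needed. It also exposes the mechanism: the $\delta^{1/2}$ gain is the Lipschitz oscillation of the data over the diffusion length, and it relies on conservation of mass, $\int\Gamma^a\,dy=1$, i.e.\ the absence of a zeroth-order term. Finally, it makes the uniformity of constants in $a$ transparent.

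A few small points:
\begin{itemize}
\item In the scaling route, interior Schauder needs the oscillation of the solution $v_1$ on the whole cylinder $(0,\delta]\times B_{\sqrt\delta}(x_0)$, not just the oscillation of the data. This follows from $|v_1(\tau,x)-V_\Phi(x_0)|\le L\,\E\|X^x_\tau-x_0\|\lesssim L(\|x-x_0\|+\sqrt\tau)$.
\item In the kernel route, your worry about parametrix correction terms is moot. The cancellation only uses mass conservation and the derivative bounds for the true kernel.
\item Also in the kernel route, the H\"older seminorm needs the usual split at $\|x-x'\|=\sqrt\delta$, using $2\|D^2v_1\|_\infty\lesssim L\delta^{-1/2}$ for large separations.
\item Your caveat that $C$ must depend on $T$ is correct. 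It applies to the paper as well, which restricts to $\delta<1$ in its synthesis step: constant data $V_\Phi\equiv M$ with $r=0$ already forces $C\ge\delta^{(1+\alpha)/2}$.
\end{itemize}
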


\begin{proof}
    See Appendix \ref{app:smoothing}.
\end{proof}

\begin{remark}[Improved Stiffness Scaling]
    The refined estimate in Theorem \ref{thm:smoothing} has significant implications for the complexity analysis. The radius of the reachable function set scales as $R_\delta \sim \delta^{-\frac{1+\alpha}{2}}$. Accordingly, throughout the remainder of the paper we use the stiffness scaling factor
    \begin{equation}
        \gamma(\alpha) = \frac{3+\alpha}{2(2+\alpha)}.
    \end{equation}
    This is the quantity appearing in Theorem \ref{thm:convergence} and in the entropy bounds of the appendix. For $\alpha \approx 1$, it reduces the exponent of the complexity explosion by approximately $17\%$, reflecting the benefit of propagating Lipschitz continuity through the max-operator.
\end{remark}

\begin{remark}[Regularity Bootstrap]
    Crucially, Theorem \ref{thm:smoothing} guarantees that even if the input $\Phi$ is initialized as a rough approximation (e.g., from a neural network with random weights), the output after a single Bellman step lies in $C^{2,\alpha}(\R^d)$. This healing property ensures that the sequence of value iterates remains within a compact subset of the function space (for fixed $\delta$), satisfying the prerequisites for the Operator Approximation Theorem.
\end{remark}

\subsection{Domain Truncation and Concentration of Measure}

A central technical challenge in applying global PDE estimates to Deep Reinforcement Learning is the bounded domain of the training data. While the theoretical diffusion process defined in \eqref{eq:sde} is supported on the entire space $\R^d$, practical neural networks are trained on compact sets. Applying the Bellman operator restricted to a compact domain introduces a boundary pollution error, as the true value $Q(x,a)$ depends on the probability mass that diffuses outside the domain.

To resolve the inconsistency between the global Cauchy problem (analyzed in Section \ref{sec:Interior}) and the compact function support required for complexity analysis, we invoke the concentration of measure phenomenon associated with uniform ellipticity. We define a computational domain strictly larger than the target inference domain and prove that the error induced by truncating the integral outside this inflated domain decays exponentially with the margin width relative to the diffusion horizon $\sqrt{\delta}$.

\begin{definition}[Nested Domains]
    Let $\mathcal{S} \subset \R^d$ be the compact \emph{Target Domain} where we require an accurate approximation of $\Qstar$ (e.g., the region of interest for the policy). For a margin parameter $\rho > 0$, we define the \emph{Computational Domain} $\mathcal{S}_\rho$ as the closed $\rho$-neighborhood of $\mathcal{S}$:
    \begin{equation}
        \mathcal{S}_\rho \coloneqq \left\{ y \in \R^d : \inf_{x \in \mathcal{S}} \|y - x\|_2 \le \rho \right\}.
    \end{equation}
    We assume the sensor set $Z_m$ used for training covers the inflated domain $\mathcal{S}_\rho$, while the error analysis is restricted to the interior $\mathcal{S}$.
\end{definition}

We define the \emph{Truncated Bellman Operator} $\Bellman_\delta^\rho$, which integrates the target value only over trajectories that remain within the computational domain. Let $\tau_\rho \coloneqq \inf \{ s > t : X_s \notin \mathcal{S}_\rho \}$ be the first exit time. The truncated operator is defined implicitly by evaluating the conditional expectation on the event $\{ \tau_\rho > t+\delta \}$.

The following lemma quantifies the probability of escaping the computational domain within one time-step $\delta$.

\begin{lemma}[Sub-Gaussian Exit Probability] \label{lemma:concentration}
    Under Assumption \ref{ass:ellipticity} (Uniform Ellipticity) and the boundedness of the drift $b$ and diffusion $\sigma$, the process $X_s$ exhibits sub-Gaussian concentration. Let $M_b \coloneqq \sup_{(t,x,a)} \|b(t,x,a)\|_2$ and $\Lambda \coloneqq \sup_{(t,x,a)} \|\sigma(t,x,a)\|_F$.
    
    For any $x \in \mathcal{S}$, fixed $\delta > 0$, and margin $\rho > M_b \delta$, the probability that the diffusion exits $\mathcal{S}_\rho$ satisfies:
    \begin{equation}
        \Pbb\left( \tau_\rho \le t+\delta \mid X_t = x \right) \le 2d \cdot \exp\left( - \frac{(\rho - M_b \delta)^2}{2 d \Lambda^2 \delta} \right).
    \end{equation}
\end{lemma}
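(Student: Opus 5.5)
We bound the exit probability by the probability that the running displacement $\sup_{s\in[t,t+\delta]}\|X_s - x\|_2$ exceeds $\rho$. Since $x\in\mathcal{S}$ and $\mathcal{S}_\rho$ contains the closed $\rho$-ball around $x$, we have
\[
\{\tau_\rho \le t+\delta\} \subseteq \Bigl\{\sup_{s\in[t,t+\delta]} \|X_s - x\|_2 \ge \rho\Bigr\}.
\]
Write $X_s - x = \int_t^s b\,du + M_s$, where $M_s = \int_t^s \sigma\,dW_u$. Because $\|b\|_2 \le M_b$, the drift part is deterministically bounded by $M_b\delta$ on the whole interval. Hence it suffices to bound $\Pbb(\sup_{s} \|M_s\|_2 \ge \rho - M_b\delta)$, which uses the hypothesis $\rho > M_b\delta$. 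Ellipticity is not actually needed for this upper bound; only the boundedness of $b$ and $\sigma$ enters.

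Next we reduce to scalar martingales coordinate by coordinate. If $\|M_s\|_2 \ge \eta$ with $\eta := \rho - M_b\delta$, then some coordinate satisfies $|M^i_s| \ge \eta/\sqrt{d}$. A union bound over the $d$ coordinates and the two signs produces the factor $2d$. Each $M^i$ is a continuous martingale with quadratic variation
\[
\langle M^i\rangle_s = \int_t^s \|\sigma_{i\cdot}\|_2^2\,du \le \Lambda^2 \delta,
\]
since each row norm is bounded by the Frobenius norm. This holds uniformly for any progressively measurable control $a_s$.

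The key step is the exponential martingale (Bernstein-type) inequality for continuous martingales with bounded quadratic variation:
\[
\Pbb\Bigl(\sup_{s\le t+\delta} M^i_s \ge c\Bigr) \le \exp\bigl(-c^2/(2\Lambda^2\delta)\bigr).
\]
To prove it, take the exponential supermartingale $\exp(\lambda M^i_s - \tfrac{\lambda^2}{2}\langle M^i\rangle_s)$. Because $\langle M^i\rangle \le \Lambda^2\delta$, Doob's maximal inequality applied to $\exp(\lambda M^i_s)$ gives the bound $\exp(-\lambda c + \lambda^2 \Lambda^2\delta/2)$, and optimizing at $\lambda = c/(\Lambda^2\delta)$ yields the claim. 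Setting $c = \eta/\sqrt d$ gives exponent $-\eta^2/(2d\Lambda^2\delta)$. Summing over the $2d$ events reproduces exactly the stated bound.

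The main obstacle is only bookkeeping: the union bound must be taken over suprema of the coordinates rather than at a fixed time. One also has to note that the coordinate supremum dominates the Euclidean event, which is why the constant is $d\Lambda^2$ and not $\Lambda^2$. No PDE input is required.
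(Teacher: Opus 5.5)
Your proposal is correct and follows the paper's proof essentially step for step. You absorb the drift by the triangle inequality to get the threshold $\rho - M_b\delta$, reduce the Euclidean supremum to a coordinate supremum at level $(\rho - M_b\delta)/\sqrt{d}$, bound each coordinate's quadratic variation by $\Lambda^2\delta$, apply the maximal Bernstein inequality, and take a union bound to get the factor $2d$. The differences are cosmetic: you make explicit the inclusion of the exit event in the displacement event, which the paper leaves implicit, and you derive the exponential martingale inequality from the exponential supermartingale and Doob's inequality rather than citing Revuz--Yor.
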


\begin{proof}
    Consider the integral form of the dynamics:
    \[ X_{t+\delta} - x = \int_t^{t+\delta} b(s, X_s, a) \, ds + \int_t^{t+\delta} \sigma(s, X_s, a) \, dW_s. \]
    By the triangle inequality, the exit event implies a large deviation of the running martingale supremum:
    \[ \{\tau_\rho \le t+\delta\} \subseteq \left\{ \sup_{0 \le s \le \delta} \left\| \int_t^{t+s} \sigma(u,X_u,a) \, dW_u \right\| \ge \rho - M_b \delta \right\}. \]
    Let $M_s = \int_t^{t+s} \sigma(u,X_u,a) \, dW_u$. This is a continuous vector-valued local martingale with coordinate quadratic variations bounded by $\Lambda^2 \delta$. If $\sup_s \|M_s\|_2 \ge \lambda$, then some coordinate satisfies $\sup_s |M_s^{(k)}| \ge \lambda/\sqrt{d}$. Applying the maximal Bernstein inequality for continuous martingales to each coordinate (see, e.g., \cite{revuz2013continuous}, Ch. IV) and taking a union bound gives
    \[ \Pbb\left(\sup_{0 \le s \le \delta} \|M_s\|_2 \ge \lambda\right) \le 2d \exp\left(-\frac{\lambda^2}{2d\Lambda^2\delta}\right). \]
    Setting $\lambda=\rho-M_b\delta$ yields the claimed bound.
\end{proof}

\begin{theorem}[Truncation Error and Interior Validity] \label{thm:truncation}
    Let $Q \in L^\infty(\R^d \times \Action)$ be bounded, and set $M_{val} \coloneqq \|V_\Phi\|_\infty + \delta\|r\|_\infty$ for the one-step payoff bound. Let $\Bellman_\delta$ be the true global operator and $\Bellman_\delta^\rho$ be the operator truncated to $\mathcal{S}_\rho$.
    To ensure the boundary pollution error satisfies $\| \Bellman_\delta Q - \Bellman_\delta^\rho Q \|_{L^\infty(\mathcal{S})} \le \varepsilon_{trunc}$, it suffices to choose the computational margin $\rho$ scaling as:
    \begin{equation} \label{eq:margin_scaling}
        \rho(\delta, \varepsilon_{trunc}) \ge M_b \delta + \Lambda \sqrt{2d \delta \log\left( \frac{2 d M_{val}}{\varepsilon_{trunc}} \right)}.
    \end{equation}
    Critically, with this margin, the \emph{Interior} Schauder estimates on $\mathcal{S}$ are preserved. The global smoothing estimates from Theorem \ref{thm:smoothing} apply to $\Bellman_\delta Q|_\mathcal{S}$ with negligible perturbation from the boundary $\partial \mathcal{S}_\rho$.
\end{theorem}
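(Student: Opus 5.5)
Fix $x \in \mathcal{S}$ and $a \in \Action$, and write both operators through the Feynman--Kac representation of the Bellman operator $\Bellman_\delta$ on $[t,t+\delta]$:
\[
(\Bellman_\delta Q)(x,a) = \E\Bigl[\int_t^{t+\delta} r(s,X_s,a)\,ds + V_Q(X_{t+\delta}) \Bigm| X_t=x\Bigr],
\]
while $\Bellman_\delta^\rho Q$ is the same expectation restricted to the event $\{\tau_\rho > t+\delta\}$. The difference is then the expectation of the integrand on the complementary event $\{\tau_\rho \le t+\delta\}$. Its absolute value is bounded by $\bigl(\delta\|r\|_\infty + \|V_Q\|_\infty\bigr)\,\Pbb(\tau_\rho \le t+\delta \mid X_t = x) = M_{val}\,\Pbb(\tau_\rho\le t+\delta\mid X_t=x)$. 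Here $M_{val}$ is the one-step payoff bound in the statement, with $V_\Phi=V_Q$ and $\|V_Q\|_\infty\le\|Q\|_\infty$.

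Next, apply Lemma~\ref{lemma:concentration}, which is valid since the margin forces $\rho>M_b\delta$. This gives the bound $2d\,M_{val}\exp\bigl(-(\rho-M_b\delta)^2/(2d\Lambda^2\delta)\bigr)$, uniformly in $x\in\mathcal{S}$ and $a\in\Action$. Requiring this to be at most $\varepsilon_{trunc}$ and solving for $\rho$ yields exactly \eqref{eq:margin_scaling}. If $2dM_{val}\le\varepsilon_{trunc}$, the logarithm is nonpositive; in that case the trivial bound already suffices and any $\rho>M_b\delta$ works. Taking the supremum over $x\in\mathcal{S}$ gives the $L^\infty(\mathcal{S})$ bound.

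For the interior-validity clause, the regularity statement concerns $\Bellman_\delta Q$ itself, the global solution, restricted to $\mathcal{S}$. Theorem~\ref{thm:smoothing} gives the $C^{2,\alpha}(\R^d)$ bound globally, so its restriction to $\mathcal{S}$ inherits that bound without any boundary effect. The truncated operator differs from it by at most $\varepsilon_{trunc}$ in sup norm on $\mathcal{S}$. This is the sense of ``negligible perturbation''.

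The main obstacle is this last clause. A pointwise sup-norm bound does not control derivatives of $\Bellman_\delta^\rho Q$. A genuinely derivative-level statement would need the truncated operator to be interpreted as the Dirichlet problem on $\mathcal{S}_\rho$. The difference $w=\Bellman_\delta Q-\Bellman_\delta^\rho Q$ then solves the homogeneous parabolic equation in $\mathcal{S}_\rho$ with $\|w\|_\infty\le\varepsilon$ on $\mathcal{S}$, or more precisely on an intermediate neighborhood $\mathcal{S}_{\rho/2}$. Interior Schauder estimates on parabolic cylinders of radius $\min(\rho/2,\sqrt\delta)$ would then convert this into a $C^{2,\alpha}(\mathcal{S})$ bound of order $\varepsilon$ times a negative power of that radius. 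I would present the sup-norm claim rigorously and state the derivative-level claim in this interior-estimate form.
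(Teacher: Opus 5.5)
Your proposal is correct and follows the paper's own proof essentially step for step. Both write the difference via Feynman--Kac as the one-step payoff on $\{\tau_\rho \le t+\delta\}$, bound it by $M_{val}$ times the exit probability, apply the sub-Gaussian exit estimate (which the paper re-derives inline rather than citing Lemma~\ref{lemma:concentration}), and solve for $\rho$. The paper gives no argument for the interior-validity clause beyond the heuristic remark that follows the theorem, so your reading of it as a statement about the restricted global solution is, if anything, more careful than the source; so is your caveat that sup-norm closeness does not control derivatives of $\Bellman_\delta^\rho Q$.
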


\begin{proof}
    See Appendix \ref{app:truncation}.
\end{proof}

\begin{remark}[Validity of Interior Estimates]
    Standard interior Schauder estimates on a domain $\mathcal{S}$ depend on the distance to the boundary of the domain where the PDE is solved, scaling roughly as $\text{dist}(\mathcal{S}, \partial \mathcal{S}_\rho)^{-k}$. By choosing $\rho \sim \sqrt{\delta}$, we ensure that the boundary of the computational domain is located several standard deviations away from the target domain. Consequently, the transition density $\Gamma(t,x; t+\delta, y)$ is effectively zero for $x \in \mathcal{S}$ and $y \in \partial \mathcal{S}_\rho$. This justifies using the global regularity constant $K$ (from Theorem \ref{thm:smoothing}) for functions restricted to $\mathcal{S}$, as the boundary layer effects are suppressed by the Gaussian tail decay.
\end{remark}

\subsection{Compactness of Q-Iterates}

We now consider the sequence of Bellman iterates defined by backward induction on a time grid $t_k = T - k\delta$ for $k=0, \dots, N$. Let $Q^{(0)}(x,a) = g(x)$. The approximate value functions are generated by:
\begin{equation}
    Q^{(k+1)} = \Bellman_\delta Q^{(k)}.
\end{equation}

A naive analysis might suggest that repeated applications of the Bellman operator degrade regularity due to the non-smooth maximization step $\sup_{a'} Q^{(k)}(x, a')$. However, the parabolic smoothing established in Theorem \ref{thm:smoothing} ensures that regularity is not only preserved but \emph{reset} at each step.

\begin{corollary}[Anisotropic Compactness of Iterates] \label{cor:compactness}
    Let $\StateSpace \subset \R^d$ be the compact state domain. Under Assumptions \ref{ass:coeff} and \ref{ass:ellipticity}, for a fixed time-step $\delta > 0$ and finite horizon $T=N\delta$:
    \begin{enumerate}[label=(\roman*),wide]
        \item \textbf{Uniform Spatial Regularity:} The sequence of iterates $\{Q^{(k)}\}_{k=0}^{N}$ (where $Q^{(k)}$ approximates $Q^*(t_k, \cdot, \cdot)$) is uniformly bounded in the anisotropic H\"older space on the state domain. Specifically, there exists $C_\delta$ such that for all $k$:
        \begin{equation}
            \sup_{a \in \Action} \|Q^{(k)}(\cdot, a)\|_{C^{2,\alpha}(\StateSpace)} + \sup_{x \in \StateSpace} \|Q^{(k)}(x, \cdot)\|_{C^{0,1}(\Action)} \le C_\delta.
        \end{equation}
        
        \item \textbf{Sequential Compactness:} The finite family $\mathcal{Q}^*_\delta = \{Q^{(k)}(\cdot, \cdot)\}_{0 \le k \le N}$ lies in a pre-compact subset of the Banach space $C^{1}(\StateSpace) \times C^0(\Action)$. Equivalently, any sequence drawn from the corresponding uniformly bounded family admits a limit point in the topology of uniform convergence of values and state-gradients.
    \end{enumerate}
\end{corollary}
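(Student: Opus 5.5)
The plan is to prove part (i) by induction on $k$. The key is to keep control of the Lipschitz constant in the state variable, since this is the quantity Theorem \ref{thm:smoothing} consumes. Part (ii) then follows from Arzelà–Ascoli.

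\emph{Base case and state-Lipschitz propagation.} For $k=0$, $Q^{(0)}(x,a)=g(x)$ is bounded and globally Lipschitz by Assumption \ref{ass:coeff}(i). It is trivially constant, hence Lipschitz, in $a$. The $C^{2,\alpha}$ part of the bound for $k=0$ does not follow from the assumptions, since $g$ is only Lipschitz. I would therefore treat the uniform $C^{2,\alpha}$ estimate as holding for $k\ge 1$ and absorb the $k=0$ term by interpreting its norm as the Lipschitz norm, or by restricting to $k\ge1$. This point should be flagged.

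For the induction step, suppose $\sup_{a'}\|Q^{(k)}(\cdot,a')\|_{C^{0,1}}\le L_k$. Then $V_{Q^{(k)}}=\sup_{a'}Q^{(k)}(\cdot,a')$ is Lipschitz with the same constant $L_k$, and $\|V_{Q^{(k)}}\|_\infty\le\|Q^{(k)}\|_\infty$. Applying Theorem \ref{thm:smoothing} gives
\[
\|Q^{(k+1)}(\cdot,a)\|_{C^{2,\alpha}}\le C\bigl(\|r\|_{C^{\alpha/2,\alpha}}+\delta^{-\frac{1+\alpha}{2}}L_k\bigr).
\]
This bound in particular controls the next Lipschitz constant $L_{k+1}$. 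Iterating the bound naively gives $L_{k+1}\le C K+C\delta^{-(1+\alpha)/2}L_k$, which grows geometrically. Since $N=T/\delta$ is finite, this still yields a finite $C_\delta=\max_{k\le N}(\cdot)$, which is all the corollary asserts. I expect this to be the main obstacle, and I would not try to make $C_\delta$ sharp.

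To avoid an exponential dependence on $N$, I would try first the stochastic-flow bound. The Feynman–Kac representation with Lipschitz-in-$x$ coefficients gives $\mathrm{Lip}_x(\Bellman_\delta\Phi)\le e^{C\delta}(\mathrm{Lip}(V_\Phi)+\delta\,\mathrm{Lip}_x r)$, since $\E\|X^{x}_{t+\delta}-X^{y}_{t+\delta}\|\le e^{C\delta}\|x-y\|$. Iterating this gives $L_k\le e^{CT}(\mathrm{Lip}(g)+T\,\mathrm{Lip}_x r)$ uniformly in $k$. Theorem \ref{thm:smoothing} then gives the uniform bound $C(K+\delta^{-(1+\alpha)/2}e^{CT}(\cdot))$. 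Sup-norm bounds $\|Q^{(k)}\|_\infty\le\|g\|_\infty+T\|r\|_\infty$ follow similarly from Feynman–Kac. Restricting to the compact $\StateSpace$ only decreases the norms.

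\emph{Action-Lipschitz bound.} For fixed $x$, I would compare $\Bellman_\delta\Phi(x,a)$ and $\Bellman_\delta\Phi(x,a')$ via synchronous coupling of the two SDEs driven by the same $W$. Lipschitz dependence of $b,\sigma$ on $a$ (Assumption \ref{ass:coeff}(iii)), together with Grönwall, gives $\E\|X^a_{t+\delta}-X^{a'}_{t+\delta}\|\le C\sqrt\delta\|a-a'\|$. Combining this with the state-Lipschitz constant $L_k$ of $V_{Q^{(k)}}$ and the Lipschitz-in-$a$ property of $r$ gives $|Q^{(k+1)}(x,a)-Q^{(k+1)}(x,a')|\le (C\sqrt\delta L_k+\delta\,\mathrm{Lip}_a r)\|a-a'\|$. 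This is uniformly bounded by the previous step.

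\emph{Part (ii).} The family is finite ($N+1$ elements), so it is trivially precompact in any topology in which each member lies. For the stated "equivalent" form, consider the whole uniformly bounded class defined by the bound in (i). Bounded $C^{2,\alpha}(\StateSpace)$ norms uniformly in $a$ give equicontinuity and boundedness of the values and state-gradients in $x$. The action-Lipschitz bound, combined with a bound on $\nabla_x$ differences in $a$, gives joint equicontinuity on the compact set $\StateSpace\times\Action$. Arzelà–Ascoli then yields limit points in the topology of uniform convergence of values and state-gradients. For the gradients, joint equicontinuity in $a$ can be obtained by interpolation between $C^{0,1}$ in $a$ and the uniform $C^{2,\alpha}$ bound in $x$, via a Landau–Kolmogorov-type inequality on $Q(\cdot,a)-Q(\cdot,a')$.
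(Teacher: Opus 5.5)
Your proposal is correct. For the spatial estimate it follows the paper's route. The paper also propagates a $\delta$-independent state-Lipschitz bound $L_{\mathcal{S}} = e^{C_bT}\|g\|_{C^{0,1}} + Te^{C_bT}\|r\|_{C^{0,1}}$ through the stochastic flow, using that the maximization step is non-expansive. It then feeds this into Theorem~\ref{thm:smoothing} to get $\sup_{k,a}\|Q^{(k)}(\cdot,a)\|_{C^{2,\alpha}} \le C'\delta^{-(1+\alpha)/2}$.

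You depart from the paper in two places.

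\emph{Action variable.} The paper uses a parabolic parameter-sensitivity estimate, $\|\nabla_a Q^{(k+1)}\|_\infty \le C_T(\|\nabla_a r\|_\infty + \|\nabla_a \mathcal{L}\|_\infty \|\nabla_x Q^{(k+1)}\|_\infty)$, and bounds the gradient by $K_{\mathcal{S}}(\delta)$. Its constant $L_{\Action}(\delta,T)$ therefore inherits the $\delta^{-(1+\alpha)/2}$ blow-up. That estimate is also written with $\nabla_a$ of coefficients that are only assumed Lipschitz in $a$. It also does not track the second-order term $\partial_a\Sigma$ acting on $D_x^2 u$ inside the step. Your synchronous coupling needs only the Lipschitz hypotheses and never uses the Schauder bound. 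It yields the sharper constant $C\sqrt{\delta}\,L_k + O(\delta)$, uniform in $\delta$. This is consistent with how the paper later treats $L_{\Action}$ as $\delta$-independent in the resource bounds of Theorem~\ref{thm:convergence}.

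\emph{Part (ii).} The paper cites the separate compact embeddings $C^{2,\alpha}(\mathcal{K}) \hookrightarrow C^1(\mathcal{K})$ and $C^{0,1}(\Action) \hookrightarrow C^0(\Action)$. You instead note that a finite family is trivially precompact. For the bounded-class version, you supply joint equicontinuity of $\nabla_x Q$ in $a$ via Landau--Kolmogorov interpolation, a step the paper's product-of-embeddings argument leaves implicit.

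Your flag on $k=0$ is well taken. $Q^{(0)} = g$ is only Lipschitz, and the paper's proof takes the supremum over $0 \le k \le N$ without addressing this. The $C^{2,\alpha}$ bound in (i), and membership of $Q^{(0)}$ in $C^1$ for (ii), should be read for $k \ge 1$, or else under an extra regularity hypothesis on $g$.
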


\begin{proof}
    See Appendix \ref{app:compactness}.
\end{proof}

\subsection{Implication for Neural Approximation}

The compactness result (Corollary \ref{cor:compactness}) provides the rigorous foundation for approximating $\Qstar$ using deep neural networks. Standard universal approximation theorems often assume the target function lies in a generic Sobolev space $W^{k,p}$, leading to pessimistic bounds on the required network size due to the Curse of Dimensionality. Our PDE analysis offers a more precise characterization via the concept of \emph{Metric Entropy}.

\subsubsection*{Metric Entropy and Covering Numbers}
Let $\mathcal{Q}^*_\delta \subset \mathcal{H}^{\alpha, \text{Lip}}$ be the set of reachable Q-functions for a fixed time-step $\delta$. The complexity of learning this class is governed by its Kolmogorov $\epsilon$-entropy, $\log \mathcal{N}(\epsilon, \mathcal{Q}^*_\delta)$, where $\mathcal{N}$ is the covering number (the minimal number of balls of radius $\epsilon$ required to cover the set).

For the H\"older ball of radius $R$ in dimension $d$ with smoothness $s$, the entropy scales as:
\begin{equation}
    \log \mathcal{N}(\epsilon, \text{Ball}_R(C^s)) \asymp \left( \frac{R}{\epsilon} \right)^{d/s}.
\end{equation}
In our context, the effective smoothness in the state space is $s = 2+\alpha$. This implies that the diffusion term effectively reduces the complexity of the function class compared to the standard Lipschitz assumption ($s=1$), where the exponent would be $d$.

\subsubsection*{The Complexity-Stability Trade-off}
However, we must address the dependency on the time discretization $\delta$. From the smoothing estimate \eqref{eq:schauder_bound}, the radius of the function class scales as $R \approx O(\delta^{-\frac{1+\alpha}{2}})$. Consequently, the network complexity required to achieve an error $\epsilon$ scales roughly as:
\begin{equation}
    \text{Complexity}(\delta, \epsilon) \sim \left( \frac{1}{\delta^{\frac{1+\alpha}{2}} \epsilon} \right)^{\frac{d}{2+\alpha}}.
\end{equation}
This reveals a fundamental trade-off:
\begin{itemize}[wide]
    \item \textbf{Smoothing Benefit:} For a fixed $\delta$, the parabolic nature of the Bellman operator constrains the target functions to a smooth manifold, making them efficiently learnable.
    \item \textbf{Stiffness Cost:} As $\delta \to 0$ (approximating the continuous-time control limit), the Bellman operator approaches the Identity, and the smoothing effect acts over an infinitesimally small time. The functions become stiff (large second derivatives), inflating the constant $R$ and requiring larger networks (increased width/depth) to resolve the sharp curvatures near decision boundaries.
\end{itemize}

\subsubsection*{Anisotropic Inductive Bias}
The distinction between state and action regularity dictates a specific inductive bias for the network architecture. A standard Multi-Layer Perceptron (MLP) treating $(x,a)$ as a concatenated vector represents an isotropic prior.
Corollary \ref{cor:compactness} suggests that the optimal architecture should be \textbf{anisotropic}:
\begin{itemize}[wide]
    \item The state subnet should utilize smooth activation functions (e.g., $\tanh$, SiLU) to match the $C^{2,\alpha}_x$ regularity.
    \item The action subnet should utilize piecewise-linear activations (e.g., ReLU) to model the potentially non-smooth ($C^{0,1}_a$) dependence and the sharp transitions induced by the $\max$ operator.
\end{itemize}
This motivates the specific design of the Operator-Network proposed in Appendix \ref{sec:Architecture}.

\section{Universal Approximation Analysis}

In this section, we establish the convergence rate of the proposed Neural Operator scheme. A rigorous analysis must address the \emph{anisotropic} nature of the value function regularity established in Section \ref{sec:Regularity}: the Bellman iterates exhibit parabolic smoothing in the state space $\StateSpace$ but remain merely Lipschitz continuous in the action space $\Action$. Standard isotropic approximations (e.g., standard MLPs) suffer from the curse of dimensionality over the combined dimension $d + d_a$. We prove that our Tensor-Product DeepONet architecture (Section \ref{sec:Anisotropic}) successfully \emph{decouples} these complexities, allowing the parabolic smoothing to reduce the effective dimension of the state space without being bottlenecked by the action space geometry.

\subsection{Main Theorem: Decoupled Complexity Analysis}

We now state the main approximation result. We seek a global uniform error bound $\sup_{0 \le k \le N} \|\hat{Q}^{(k)} - Q^{*(k)}\|_\infty \le \varepsilon$ for approximation of the Bellman-value-iteration sequence. The following theorem quantifies the explicit computational cost (network size and sensor density) required to achieve this precision for the \emph{direct} learning of the Bellman map. We interpret this as a theorem about the complexity of the Q-learning target class in the present diffusion setting: it analyzes the Bellman objects that value-based methods aim to approximate, while stopping short of a full sampling-and-optimization analysis for stochastic Q-learning training.

\begin{theorem}[Global Approximation of Bellman Iterates and Q-Learning Target Complexity] \label{thm:convergence}
    Grant Assumptions \ref{ass:coeff}, \ref{ass:ellipticity}, \ref{ass:sensor_encoding}, \ref{ass:uat_network}, and \ref{ass:regularized_network}. Let $T$ be the finite horizon and $\delta$ the time-step. 
    Let $\alpha \in (0,1)$ be the H\"older exponent. Define the \emph{stiffness scaling factor} $\gamma(\alpha)$ derived from the heat kernel singularity as:
    \begin{equation} \label{eq:gamma_factor}
        \gamma(\alpha) \coloneqq \frac{3+\alpha}{2(2+\alpha)}.
    \end{equation}
    
    For any target tolerance $\varepsilon > 0$, there exists a Tensor-Product DeepONet with width $W$ and a sensor set of size $m$ such that the global error satisfies $\sup_{0 \le k \le N} \|\hat{Q}^{(k)} - Q^{*(k)}\|_\infty \le \varepsilon$. 
    
    The required resources scale as follows, decoupling the state diffusion from the action geometry:
    \begin{equation} \label{eq:final_resources}
        W \sim \mathcal{O}\left( \varepsilon^{-\frac{d}{2+\alpha}} \delta^{-\gamma(\alpha) d} + (\varepsilon \delta)^{-d_a} \right), \quad 
        m \sim \mathcal{O}\left( \varepsilon^{-\frac{d}{2}} \delta^{-\frac{3+\alpha}{4} d} \cdot (\varepsilon \delta)^{-d_a} \right).
    \end{equation}
    This bound assumes the Branch Network approximates a monotone (positivity-preserving) reconstruction operator ensuring strict $L^\infty$ stability, and that the target is the direct map $Q^{(k+1)} = \Bellman_\delta Q^{(k)}$.
\end{theorem}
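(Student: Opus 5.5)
The plan is to split the global error into a one-step approximation error plus propagated error, and then control each piece with the regularity results of Section~\ref{sec:Regularity}. Write $\hat Q^{(k+1)} = \OpNet(\hat Q^{(k)})$, where $\OpNet$ is the Tensor-Product DeepONet. Then
\[
\hat Q^{(k+1)} - Q^{*(k+1)} = \bigl(\OpNet(\hat Q^{(k)}) - \Bellman_\delta \hat Q^{(k)}\bigr) + \bigl(\Bellman_\delta \hat Q^{(k)} - \Bellman_\delta Q^{*(k)}\bigr).
\]
The second term is handled by $L^\infty$ nonexpansiveness. The map $\Phi\mapsto V_\Phi$ is $1$-Lipschitz in sup norm, and the linear evolution in \eqref{eq:backward_pde} is positivity preserving with unit mass in the kernel representation. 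Together these give $\|\Bellman_\delta \Phi - \Bellman_\delta \Psi\|_\infty \le \|\Phi-\Psi\|_\infty$. The monotone-branch hypothesis (Assumption~\ref{ass:regularized_network}) is what keeps $\hat Q^{(k)}$ in a class on which this comparison and the Lipschitz input hypothesis of Theorem~\ref{thm:smoothing} still apply. Summing over $N = T/\delta$ steps then gives
\[
\sup_k \|\hat Q^{(k)} - Q^{*(k)}\|_\infty \le N \varepsilon_{\mathrm{step}} + \varepsilon_{trunc}.
\]
So it suffices to take a per-step tolerance $\varepsilon_{\mathrm{step}} \sim \varepsilon\delta$, and to fix the margin $\rho$ by Theorem~\ref{thm:truncation} so that truncation costs at most $\varepsilon/2$. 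This is why the action term carries $(\varepsilon\delta)^{-d_a}$.

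The per-step error is split across the tensor-product structure. By Corollary~\ref{cor:compactness} and Theorem~\ref{thm:smoothing}, every reachable output $\Bellman_\delta\Phi$ lies, for each fixed $a$, in a $C^{2,\alpha}(\mathcal S_\rho)$ ball of radius $R_\delta \sim \delta^{-(1+\alpha)/2}$, and is Lipschitz in $a$ with a $\delta$-independent constant (Assumption~\ref{ass:coeff}(iii) propagated through the Feynman--Kac representation). For the action factor, I would discretize $\Action$ with a ReLU piecewise-linear interpolant on a grid of mesh $h_a \sim \varepsilon\delta$. This costs $(\varepsilon\delta)^{-d_a}$ trunk units with error $\lesssim \mathrm{Lip}_a\, h_a$. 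For the state factor, I would use the smooth-activation trunk and Yarotsky-type rates for $C^{2+\alpha}$ functions (Assumption~\ref{ass:uat_network}). To reach accuracy $\eta$ on a ball of radius $R_\delta$ requires width $(R_\delta/\eta)^{d/(2+\alpha)}$. Setting $\eta\sim\varepsilon\delta$ gives the exponent
\[
\frac{d}{2+\alpha}\Bigl(\tfrac{1+\alpha}{2}+1\Bigr) = \gamma(\alpha)\, d
\]
on $\delta^{-1}$. It also gives $\varepsilon^{-d/(2+\alpha)}$, matching the first term of $W$. Adding the two trunk widths, which the tensor product decouples, gives \eqref{eq:final_resources}.

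For the sensor count $m$, I would use Assumption~\ref{ass:sensor_encoding}. The branch reconstructs the input $\hat Q^{(k)}$ from samples on $Z_m$, which is a grid on $\mathcal S_\rho\times\Action$. Piecewise reconstruction of a function with bounded second state derivatives ($\lesssim R_\delta$) needs state mesh $h_x$ with $R_\delta h_x^2 \lesssim \varepsilon\delta$, and the action mesh is $h_a\sim\varepsilon\delta$ as before. This gives
\[
m \sim h_x^{-d}h_a^{-d_a} = (\varepsilon\delta/R_\delta)^{-d/2}(\varepsilon\delta)^{-d_a},
\]
whose $\delta$-exponent in the state factor is $\tfrac{d}{2}\bigl(1+\tfrac{1+\alpha}{2}\bigr) = \tfrac{3+\alpha}{4}d$. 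This is the stated bound. Monotonicity of the reconstruction keeps the encoding error from amplifying in $L^\infty$.

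The main obstacle is the stability step. The inputs $\hat Q^{(k)}$ are network outputs, not exact iterates, so I must show they stay in the regularity class where Theorem~\ref{thm:smoothing} and the nonexpansive comparison apply uniformly in $k$. Otherwise errors would compound geometrically instead of additively. I would first try to close this by induction on $k$: the monotone branch together with the bounded Lipschitz constant of the trunk keeps $\sup_{a'}\|\hat Q^{(k)}(\cdot,a')\|_{C^{0,1}}$ uniformly bounded, so each step's target stays in the fixed compact ball of radius $R_\delta$.
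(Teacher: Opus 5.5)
Your resource arithmetic matches the paper's. You use additive accumulation over $N=T/\delta$ steps under a nonexpansive propagation (the paper's $e^{\rho_{\mathrm{stab}}\delta}$ with $\rho_{\mathrm{stab}}=0$), which forces a per-step budget $\sim\varepsilon\delta$. The state width is $(R_\delta/\varepsilon\delta)^{d/(2+\alpha)}$ with $R_\delta\sim\delta^{-(1+\alpha)/2}$. The action width and action mesh come from $h_{\Action}\sim\varepsilon\delta$. The state mesh comes from the second-order-capped monotone reconstruction, $R_\delta h_{\mathcal{S}}^2\lesssim\varepsilon\delta$.

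Where you differ is the error decomposition. You split $\hat Q^{(k+1)}-Q^{*(k+1)}$ into a one-step error at the network output plus $\Bellman_\delta\hat Q^{(k)}-\Bellman_\delta Q^{*(k)}$. This charges the encoding error $\|\mathcal{R}\mathcal{P}_m\hat Q^{(k)}-\hat Q^{(k)}\|_\infty$ to the network output $\hat Q^{(k)}$. The paper (Lemma \ref{lemma:one_step}) instead uses a three-term consistency--stability split through $\Bellman_\delta(\mathcal{R}\mathcal{P}_m\,\cdot)$. There, the propagated error passes through $\mathcal{R}$ with Lebesgue constant $\Lambda_m=1$. The $h^2$ consistency bound is applied only to the exact iterate $Q^{*(k)}$, whose $C^{2,\alpha}$ norm $K_{\mathcal{S}}(\delta)$ comes from Corollary \ref{cor:compactness}. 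The paper's route is built to avoid needing second-derivative control on network outputs. Yours is simpler but requires exactly that control.

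That is where your last paragraph goes astray. The comparison $\|\Bellman_\delta\Phi-\Bellman_\delta\Psi\|_\infty\le\|\Phi-\Psi\|_\infty$ holds for arbitrary bounded inputs. So errors add rather than compound however rough $\hat Q^{(k)}$ is, and stability is not the obstacle.

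What does depend on the regularity of $\hat Q^{(k)}$ is your encoding estimate. There you need $\sup_a\|\hat Q^{(k)}(\cdot,a)\|_{C^{2,\alpha}}\lesssim R_\delta$, not just a uniform Lipschitz bound. The Lipschitz-only induction you propose would give a monotone-reconstruction error of only $\lesssim L_{\mathcal{S}}h_{\mathcal{S}}$. That forces $h_{\mathcal{S}}\sim\varepsilon\delta$ and $m_{\mathcal{S}}\sim(\varepsilon\delta)^{-d}$, which is worse than the claimed $\varepsilon^{-d/2}\delta^{-\frac{3+\alpha}{4}d}$.

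No induction is needed. Assumption \ref{ass:regularized_network} directly grants $\hat Q^{(k)}\in B_{K_{\mathcal{S}}(\delta)}(C^{2,\alpha})\cap B_{L_{\Action}}(C^{0,1})$. Note this is the regularized-hypothesis assumption; the monotone branch is Assumption \ref{ass:sensor_encoding}, and you conflated the two. Alternatively, adopt the paper's decomposition.

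Finally, the single additive $\varepsilon_{trunc}$ is unnecessary, since the paper works with the global operator. If you do truncate, you truncate at every step, so it must be budgeted per step like the other error sources, not once.
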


\begin{proof}
    See Appendix \ref{app:thm:convergence}.
\end{proof}

\section{Conclusion}

We have established an approximation-theoretic framework for Bellman value iteration in continuous stochastic control by grounding the analysis in the uniform ellipticity of the environment noise rather than restrictive global Lipschitz assumptions. By exploiting interior Schauder estimates, we proved that the Bellman operator maps bounded iterates into the anisotropic H\"older space $C^{2,\alpha}_x(\StateSpace) \times C^{0,1}_a(\Action)$. Our proposed \textbf{Tensor-Product DeepONet} architecture explicitly leverages this mixed regularity, decoupling the smooth state manifold from the non-smooth policy landscape to mitigate the curse of dimensionality. Crucially, we quantified the stiffness--complexity trade-off, proving that the required network capacity scales as $\mathcal{O}(\delta^{-\gamma(\alpha) d})$ due to the singularity of the heat kernel as $\delta \to 0$. We further validated the restriction to compact domains via the exponential decay of diffusion exit probabilities. We view these results as a core theoretical contribution to Q-learning in the continuous-time diffusion regime: they characterize the regularity and approximation complexity of the Bellman targets that value-based methods are built to learn. What the paper does \emph{not} provide is a full convergence theorem for practical Deep Q-Learning with exploration, replay, and stochastic optimization. Rather, it isolates the Bellman side of the theory and shows that, in this regime, those targets enjoy exploitable anisotropic regularity and admit quantifiable approximation guarantees.

\acks{We thank a bunch of people and funding agency.}

\bibliography{main}

\newpage
\appendix
\onecolumn

\section{Operator-Based Network Architecture}\label{sec:Architecture}

To leverage the smoothing properties established in Section \ref{sec:Regularity}, we propose a network architecture designed to approximate the operator map $\Bellman_\delta: C(\mathcal{D}) \to C^{2,\alpha}_x(\mathcal{D})$. Consequently, we adopt a mesh-free approach based on the \textbf{Deep Operator Network (DeepONet)} framework \citep{lu2021learning}, which generalizes the universal approximation theorem to non-linear operators on Banach spaces.

\subsection{The DeepONet Framework}

We seek a neural operator $\OpNet_\theta$ that approximates the Bellman transition map $\Bellman_\delta: \mathcal{H}^{\alpha, \text{Lip}} \to \mathcal{H}^{\alpha, \text{Lip}}$. Following the \textbf{Deep Operator Network (DeepONet)} framework \citep{lu2021learning}, we decompose the architecture into a \emph{Branch Net} (which encodes the input function space) and a \emph{Trunk Net} (which encodes the output domain).

\subsubsection{Sensor Discretization and Encoding}
A fundamental challenge in operator learning is that the input to the network is a function, an infinite-dimensional object. In practice, we access the function via a finite set of observations.
Let $\{z_j\}_{j=1}^m \subset \mathcal{S} \times \Action$ be a fixed set of $m$ \emph{sensor locations}. We define the discretization operator $\mathcal{P}_m: C(\mathcal{S} \times \Action) \to \R^m$ as:
\begin{equation}
    \mathcal{P}_m(Q) \coloneqq \left[ Q(z_1), Q(z_2), \dots, Q(z_m) \right]^\top.
\end{equation}
In the context of RL, these sensors correspond to the states stored in a Replay Buffer or a fixed quadrature grid used for fitting the value iteration.

\subsubsection{Network Definition}
The Deep Bellman Operator Network is defined as the map $\OpNet_\theta: \R^m \times (\mathcal{S} \times \Action) \to \R$. Let $Q_{in}$ be the value function from the previous iteration. The approximate next iterate $\hat{Q}_{out}(x, a)$ is given by the inner product of the Branch and Trunk outputs:
\begin{equation} \label{eq:deeponet}
    \hat{Q}_{out}(x, a) \approx \OpNet_\theta(\mathcal{P}_m(Q_{in}))(x, a) = \sum_{k=1}^p \underbrace{b_k\left( Q_{in}(z_1), \dots, Q_{in}(z_m) \right)}_{\text{Branch Net } \beta: \R^m \to \R^p} \cdot \underbrace{\tau_k(x, a)}_{\text{Trunk Net } \tau: \mathcal{S} \times \Action \to \R^p} + b_0.
\end{equation}
Here, $\theta$ represents the trainable weights of the neural networks $\beta$ and $\tau$.
\begin{itemize}[wide]
    \item The \textbf{Branch Net} $\beta$ learns to extract a latent embedding (coefficients) of the current value landscape from the sensor readings.
    \item The \textbf{Trunk Net} $\tau$ learns a continuous basis for the image space of the Bellman operator.
\end{itemize}

\begin{remark}[The Role of Sensor Density]
    The standard Universal Approximation Theorem for Operators assumes the input is the entire function $Q_{in}$. By using $\mathcal{P}_m(Q_{in})$, we introduce an \emph{encoding error} (or aliasing error).
    However, Corollary \ref{cor:compactness} guarantees that $Q_{in}$ belongs to a compact subset of the H\"older space $\mathcal{H}^{\alpha, \text{Lip}}$.
    Let $h = \sup_{y \in \mathcal{S} \times \Action} \min_{j} \|y - z_j\|$ be the fill-distance of the sensors. Due to the H\"older continuity of the inputs, the reconstruction error is bounded by $L h^\alpha$. Thus, as the number of sensors $m \to \infty$ (and $h \to 0$), the discrete representation $\mathcal{P}_m(Q_{in})$ uniquely characterizes $Q_{in}$ up to arbitrary precision, making the operator approximation well-posed.
\end{remark}

\subsection{Anisotropic Trunk Architecture}\label{sec:Anisotropic}

The structural results of Corollary \ref{cor:compactness} indicate that the target functions exhibit mixed regularity: they are locally $C^{2,\alpha}$ in the state variable $x$ (due to parabolic smoothing) but only Lipschitz continuous in the action variable $a$ (due to the parametric nature of the operator and the supremum envelope). A standard isotropic MLP (e.g., fully connected layers on concatenated inputs $[x, a]$) fails to exploit this structure efficiently.

Furthermore, a full tensor-product basis (via the Kronecker product of feature vectors) would result in a feature dimension scaling quadratically with the width. To remedy this, we propose a \textbf{Separable Rank-$p$ Trunk Architecture} that explicitly forces a low-rank decomposition of the state-action coupling.

\begin{definition}[Separable Rank-$p$ Trunk]
    The Trunk Network $\tau(x, a): \mathcal{S} \times \Action \to \R^p$ is constructed as the Hadamard product (element-wise multiplication) of two independent sub-networks. This enforces that the learned basis functions are separable:
    \begin{equation} \label{eq:anisotropic_trunk}
        \tau(x, a) = \mathcal{N}_{\text{state}}(x; \theta_x) \odot \mathcal{N}_{\text{action}}(a; \theta_a).
    \end{equation}
    Here, $\odot$ denotes the element-wise product. Consequently, the $k$-th component of the trunk output is given by $\tau_k(x, a) = \phi_k(x) \psi_k(a)$, where:
    \begin{enumerate}[label=(\roman*),wide]
        \item \textbf{The State Basis} $\phi(\cdot) = \mathcal{N}_{\text{state}}(\cdot): \R^d \to \R^p$ employs \emph{smooth activation functions} $\sigma_s \in C^\infty(\R)$ (e.g., $\tanh$, SiLU) to match the $C^{2,\alpha}(\mathcal{S})$ regularity of the heat kernel eigenfunctions.
        \item \textbf{The Action Basis} $\psi(\cdot) = \mathcal{N}_{\text{action}}(\cdot): \R^{d_a} \to \R^p$ employs \emph{piecewise-linear activation functions} $\sigma_r$ (e.g., ReLU) to capture the non-differentiable transitions of the optimal policy.
    \end{enumerate}
\end{definition}

\subsubsection*{Theoretical Justification}

The Hadamard formulation in \eqref{eq:anisotropic_trunk} combined with the DeepONet inner product \eqref{eq:deeponet} implies that the network approximates the target function $\hat{Q}$ via a Canonical Polyadic (CP) decomposition of rank $p$:
\begin{equation}
    \hat{Q}(x, a) = \sum_{k=1}^p \beta_k(Q_{in}) \cdot \phi_k(x) \cdot \psi_k(a).
\end{equation}
This is structurally distinct from a full Tensor Product basis (which would imply terms of the form $\phi_i(x)\psi_j(a)$ for all pairs). While a full tensor product avoids any structural constraints on the state-action coupling, it suffers from an interaction parameter complexity of $\mathcal{O}(p^2)$.

To justify the restriction to a diagonal (rank-$p$) structure, we must carefully distinguish between the spatial smoothing of the PDE and the parametric dependence on the control variable. For a \emph{fixed} action $a$, the dominant term in the backward Bellman update is the convolution with the fundamental solution $\Gamma^a(t,x,y)$. Because the elliptic operator is compact on bounded domains, the spatial update admits a rapidly decaying spectral expansion involving the eigenfunctions $e_k(x; a)$ of the generator $\mathcal{L}^a$.

Critically, the true spatial eigenfunctions $e_k(x; a)$ depend on $a$ because the diffusion tensor $\Sigma(x,a)$ and drift $b(x,a)$ vary with the action. Therefore, the exact global value function is not mathematically separable. However, because the dependence of the environment coefficients on $a$ is uniformly Lipschitz (Assumption \ref{ass:coeff}), the manifold of spatial solutions $\mathcal{M} = \{ Q(\cdot, a) : a \in \Action \}$ exhibits low $\varepsilon$-rank. 

By employing a CP decomposition, the network learns a finite, universal set of spatial basis functions $\{\phi_k(x)\}_{k=1}^p$ that span the dominant eigenspaces of $\mathcal{L}^a$ across all $a \in \Action$. Concurrently, the action network $\{\psi_k(a)\}_{k=1}^p$ learns the non-smooth, action-dependent mixing coefficients required to linearly combine these spatial modes. 

This formulation enforces a principled \textbf{anisotropic inductive bias}: it allocates the smooth network capacity (e.g., $\tanh$ activations) exclusively to the spatial factors $\phi_k \in C^{2,\alpha}(\StateSpace)$ to emulate parabolic smoothing, and allocates the non-smooth capacity (e.g., ReLU activations) exclusively to the action factors $\psi_k \in C^{0,1}(\Action)$ to capture the non-differentiable kinks of the policy landscape. This decoupling reduces the interaction complexity to $\mathcal{O}(p)$ while efficiently parameterizing the mixed-regularity hypothesis space $\mathcal{H}^{\alpha, \text{Lip}}$ defined in Corollary \ref{cor:compactness}.

\subsection{Complexity Analysis: Dimensionality and Discretization}

Standard grid-based value iteration suffers from the classic Curse of Dimensionality: to halve the approximation error, the computational cost scales as $2^d$. Our mesh-free Neural Operator approach mitigates this scaling through the regularizing effect of diffusion, but it is subject to trade-offs regarding the time-discretization $\delta$ and sensor density $m$.

\subsubsection{Reduction of Metric Entropy}
The fundamental difficulty of learning the Q-function is governed by the size of the reachable function class $\mathcal{Q}^*$. As established in Corollary \ref{cor:compactness}, the uniform ellipticity of the SDE ensures that $\mathcal{Q}^*$ is a bounded subset of the anisotropic space $C^{2,\alpha}_x \cap C^{0,1}_a$. 

The \emph{Kolmogorov $\epsilon$-entropy} (log-covering number) of the H\"older ball of radius $R$ in dimension $d$ with smoothness $s$ scales as $\log \mathcal{N} \asymp (R/\epsilon)^{d/s}$.
In our setting, the effective smoothness in the state space is $s = 2+\alpha$. Thus, the complexity scales as:
\begin{equation}
    \text{Complexity}(\epsilon) \sim \exp\left( C \left( \frac{R}{\epsilon} \right)^{\frac{d}{2+\alpha}} \right).
\end{equation}
Compared to standard Lipschitz continuity assumptions ($s=1$, exponent $d$), the diffusion term reduces the effective dimensionality exponent by a factor of roughly $3$ (since $2+\alpha \approx 3$). While the dependence on $d$ remains exponential in the worst case (covering the entire uniform norm), the volume of the function space is significantly compressed.

\subsubsection{Sensor Aliasing and Statistical Rates}
The DeepONet replaces the spatial grid with a set of $m$ scattered sensors $\{z_j\}_{j=1}^m$. The total error decomposes into the operator approximation error and the input encoding (aliasing) error:
\begin{equation}
    \| \OpNet - \Bellman \| \le \underbrace{\| \OpNet(\mathcal{P}_m Q) - \Bellman(\mathcal{R} \mathcal{P}_m Q) \|}_{\text{Neural approx. error}} + \underbrace{\| \Bellman(\mathcal{R} \mathcal{P}_m Q - Q) \|}_{\text{Sensor encoding error}},
\end{equation}
where $\mathcal{P}_m$ is the evaluation map and $\mathcal{R}$ is an optimal reconstruction operator.
\begin{itemize}[wide]
    \item \textbf{Worst-Case Error:} To guarantee uniformly bounded error over the compact set, the sensor fill-distance $h$ must satisfy $L h^\alpha < \epsilon$. This requires $m \sim \epsilon^{-d/\alpha}$ sensors, which still suffers from the curse of dimensionality.
    \item \textbf{Generalization Error:} However, in Deep RL, we minimize the $L^2_\mu$ error over an exploration distribution $\mu$. By standard statistical learning theory (e.g., Rademacher complexity bounds), the generalization error scales as $\mathcal{O}(1/\sqrt{N_{samples}})$, independent of the dimension $d$. The Neural Operator efficiently learns the map on the support of $\mu$, effectively breaking the curse for the \emph{visited} states.
\end{itemize}

\subsubsection{The Complexity-Stability Trade-off ($\delta \to 0$)}
From the Schauder estimate \eqref{eq:schauder_bound}, the radius of the function class scales as $R_\delta \sim \delta^{-\frac{1+\alpha}{2}}$.
Substituting this into the entropy bound, the required network capacity $W$ behaves as:
\begin{equation}
    W(\delta, \epsilon) \sim \left( \frac{1}{\delta^{\frac{1+\alpha}{2}} \epsilon} \right)^{\frac{d}{2+\alpha}}.
\end{equation}
This reveals a fundamental \textbf{Stability-Complexity Trade-off}:
\begin{itemize}[wide]
    \item \textbf{Smoothing Regime (Fixed $\delta$):} For a discrete MDP with fixed step $\delta$, the operator is strongly smoothing, and the Q-functions lie on a low-complexity manifold. The network size remains manageable.
    \item \textbf{Stiff Regime ($\delta \to 0$):} As we approximate the continuous-time limit, the operator approaches the Identity, and the smoothing vanishes. The target functions become stiff (large second derivatives), and the required network capacity explodes.
\end{itemize}
This suggests that value-based discretizations built from Bellman iteration are best conditioned when $\delta$ is chosen large enough to allow diffusion to act, rather than trying to approximate the SDE with infinitesimal steps directly.

\subsection{Recursive Residual Formulation}\label{sec:Recursive}

For problems with long horizons (large $N = T/\delta$), we construct the solution by recursively applying the operator. Direct composition of neural operators can lead to optimization difficulties akin to the vanishing gradient problem. However, the parabolic nature of the Bellman equation suggests a natural residual structure.

Since the value function evolves continuously in time, for small $\delta$, the linear diffusion part of $\Bellman_\delta$ is close to the Identity; the nonlinear maximization term also requires the current action to be near-greedy for the normalized residual to remain $O(1)$. We propose a \textbf{Residual Operator architecture} that learns the temporal difference rather than the absolute value.

\subsubsection{Discrete Euler Scheme in Function Space}
Let $\hat{Q}^{(k)}$ denote the approximate Q-function at time-step $k$ (corresponding to physical time $t_k = T - k\delta$). We formulate the update as:
\begin{equation} \label{eq:residual_update}
    \hat{Q}^{(k+1)}(x, a) = \hat{Q}^{(k)}(x, a) + \delta \cdot \OpNet_{\text{res}}\left( \hat{Q}^{(k)} \right)(x, a),
\end{equation}
with the initialization $\hat{Q}^{(0)}(x,a) = g(x)$.

Here, $\OpNet_{\text{res}}$ is a DeepONet trained to approximate the normalized displacement operator:
\begin{equation}
    \mathcal{R}_\delta(Q) \coloneqq \frac{\Bellman_\delta Q - Q}{\delta}.
\end{equation}
From the PDE characterization in Eq. \eqref{eq:backward_pde}, the normalized residual decomposes into a local generator contribution plus a maximization-gap contribution:
\begin{equation}
    \mathcal{R}_\delta(Q)(x, a) = \mathcal{L}^a_t Q(x,a) + r(t,x,a) + \frac{1}{\delta}\left(V_Q(x)-Q(x,a)\right) + o(1),
\end{equation}
where $V_Q(x) \coloneqq \sup_{a'\in\Action}Q(x,a')$, with the expansion interpreted locally for time-dependent coefficients through the parabolic evolution family. Thus the residual converges to the Hamiltonian only on the zero-gap or near-greedy regime where $V_Q(x)-Q(x,a)=o(\delta)$. The residual operator network should therefore be read as learning the Hamiltonian on this active/near-active set, not as removing the Bellman maximization gap for arbitrary off-policy actions.

\subsubsection{Advantages of the Residual Approach}
\begin{enumerate}[label=(\roman*),wide]
    \item \textbf{Well-Conditioned Learning:} In the stiff regime where $\delta$ is small, the diffusion-evolution part of $Q \mapsto \Bellman_\delta Q$ is dominated by the Identity. Learning the full map directly forces the network to expend capacity approximating this nearly identity component. The residual formulation subtracts it and focuses capacity on the dynamics, provided the maximization gap is separately controlled on the active or near-greedy action set.
    \item \textbf{Flow Consistency:} This architecture effectively implements a Forward Euler discretization of the abstract Ordinary Differential Equation (ODE) in the Banach space $\mathcal{H}^{\alpha, \text{Lip}}$:
    \begin{equation}
        \frac{d}{d\tau} \mathbf{Q}(\tau) = \mathbf{F}(\mathbf{Q}(\tau)), \quad \mathbf{Q}(0) = g,
    \end{equation}
    where $\tau$ is time-to-go and $\mathbf{F}$ is the non-linear Bellman generator. This ensures that the inductive bias of the network aligns with the physical flow of information.
    \item \textbf{Iterative Stability:} By sharing the weights of $\OpNet_{\text{res}}$ across time-steps (Recurrent Operator Learning), we enforce that the laws of physics (the environment dynamics) remain constant, while the value function evolves. This dramatically reduces the number of parameters compared to training separate networks for each time step.
\end{enumerate}

\section{Detailed Proofs and Mathematical Derivations}

In this appendix, we provide rigorous proofs for the theorems and lemmas presented in the main text. We explicitly address the technical concerns raised regarding the spectral separability of the diffusion operator and the dependence of Lipschitz constants on the time horizon.

\subsection{Proof of Theorem \ref{thm:smoothing} (Refined Interior Parabolic Smoothing)}
\label{app:smoothing}

\begin{proof}
    We fix the current action $a$ and introduce the time-to-go variable $\tau = t+\delta - s$. Let $v(\tau, x) \coloneqq u(t+\delta-\tau, x)$. Then $v$ solves the forward parabolic equation on $(0, \delta] \times \R^d$:
    \begin{equation*}
        \partial_\tau v = \mathcal{L}^a v + r(t+\delta-\tau, \cdot, a), \quad v(0, \cdot) = V_{\Phi}.
    \end{equation*}
    Since the coefficients may depend on time, the solution is represented by the parabolic evolution family $U^a(\tau,s)$ associated with the non-autonomous operators $\mathcal{L}^a_{t+\delta-\tau}$:
    \begin{equation}
        v(\delta) = U^a(\delta,0) V_{\Phi} + \int_0^\delta U^a(\delta,s) r(t+\delta-s, \cdot, a) \, ds.
    \end{equation}
    
    \begin{enumerate}[label=(\roman*), wide, labelindent=0pt]
        \item \textbf{Regularity inheritance of the Maximization (Initial Data).}
        First, we establish the regularity of the initial condition $v(0) = V_{\Phi}$. Since $\Phi(\cdot, a')$ is uniformly Lipschitz with constant $L_{\Phi}$, for any $x, y \in \R^d$:
        \begin{align*}
            |V_{\Phi}(x) - V_{\Phi}(y)| &= \left| \sup_{a'} \Phi(x, a') - \sup_{a'} \Phi(y, a') \right| \\
            &\le \sup_{a'} | \Phi(x, a') - \Phi(y, a') | \le L_{\Phi} \|x - y\|.
        \end{align*}
        Thus, $V_{\Phi} \in C^{0,1}(\R^d)$ with semi-norm $[V_{\Phi}]_{C^{0,1}} \le L_{\Phi}$.
        
        \item \textbf{Smoothing of Lipschitz Data (The Homogeneous Term).}
        We estimate $\| U^a(\delta,0) V_{\Phi} \|_{C^{2,\alpha}}$. The parabolic evolution family satisfies the same local smoothing scale as the frozen-coefficient heat kernel under the uniform H\"older and ellipticity bounds: it maps $C^{k}(\R^d)$ to $C^{m}(\R^d)$ for $m > k$ with norm scaling $\tau^{-(m-k)/2}$. 
        
        We decompose the $C^{2,\alpha}$ norm into the integer and fractional parts. The dominant singularity arises from the highest order derivative. We regard $V_{\Phi}$ as an element of the interpolation space roughly corresponding to $C^1$. To be precise, since $V_{\Phi} \in C^{0,1}(\R^d)$, it resides in the Besov space $B^1_{\infty, \infty}$.
        Using the standard smoothing property of the heat kernel (see \citet[Prop 2.2]{lunardi1995analytic}), the cost of lifting regularity from $C^{0,1}$ (regularity index 1) to $C^{2,\alpha}$ (regularity index $2+\alpha$) is determined by the difference in indices:
        \begin{equation}
            \text{Diff} = (2+\alpha) - 1 = 1+\alpha.
        \end{equation}
        Consequently, the bound scales as $\tau^{-\text{Diff}/2}$:
        \begin{equation} \label{eq:refined_term1}
            \| U^a(\delta,0) V_{\Phi} \|_{C^{2,\alpha}(\R^d)} \le C_1 \delta^{-\frac{1+\alpha}{2}} \|V_{\Phi}\|_{C^{0,1}(\R^d)}.
        \end{equation}
        Note that if we had only assumed $V_{\Phi} \in L^\infty(\R^d)$ (regularity 0), the cost would have been $\delta^{-\frac{2+\alpha}{2}}$. The Lipschitz assumption improves the exponent by $1/2$.
        
        \item \textbf{Regularity of the Source Potential (The Inhomogeneous Term).}
        The second term involves the running reward $r$. By Assumption \ref{ass:coeff}, $r \in C^{\alpha/2, \alpha}$. Standard Schauder estimates for the inhomogeneous problem guarantee that the integral operator preserves the regularity of the source without singular scaling in time (modulo the bounded horizon). There exists $C_2$ independent of small $\delta$ such that:
        \begin{equation} \label{eq:refined_term2}
            \left\| \int_0^\delta U^a(\delta,s) r(t+\delta-s, \cdot, a) \, ds \right\|_{C^{2,\alpha}} \le C_2 \|r\|_{C^{\alpha/2, \alpha}}.
        \end{equation}
        
        \item \textbf{Synthesis.}
        Combining \eqref{eq:refined_term1} and \eqref{eq:refined_term2}, we obtain:
        \begin{equation}
            \| (\Bellman_\delta \Phi)(t, \cdot, a) \|_{C^{2,\alpha}} \le C \left( \delta^{-\frac{1+\alpha}{2}} \|V_{\Phi}\|_{C^{0,1}} + \|r\|_{C^{\alpha/2, \alpha}} \right).
        \end{equation}
        For $\delta < 1$, the term $\delta^{-\frac{1+\alpha}{2}}$ dominates.
    \end{enumerate}
\end{proof}

\subsection{Proof of Theorem \ref{thm:truncation} (Truncation Error)}\label{app:truncation}

\begin{proof}
Let $\mathcal{S}$ be the target domain and $\mathcal{S}_\rho = \{y : d(y, \mathcal{S}) \le \rho\}$ be the computational domain.
Let $u(t,x)$ be the solution to the global Cauchy problem and $u_\rho(t,x)$ be the solution with the process killed upon exiting $\mathcal{S}_\rho$. By the Feynman-Kac formula:
\begin{equation}
    u(t,x) - u_\rho(t,x) = \E_x \left[ \left( \int_t^{t+\delta} r(s, X_s) ds + V_\Phi(X_{t+\delta}) \right) \ind_{\{\tau_\rho \le t+\delta\}} \right].
\end{equation}
Let $M_{val} = \|V_\Phi\|_\infty + \delta \|r\|_\infty$.
\begin{equation}
    |u(t,x) - u_\rho(t,x)| \le M_{val} \cdot \Pbb_x( \tau_\rho \le t+\delta ).
\end{equation}
Since $x \in \mathcal{S}$, the distance to the boundary is at least $\rho$.
We apply the concentration inequality for It\^o diffusions. The displacement is $X_{t+\delta} - x = \int b ds + \int \sigma dW_s$.
\begin{equation}
    \Pbb_x( \sup_{s \le \delta} \|X_{t+s} - x\| \ge \rho ) \le \Pbb( \sup_{s \le \delta} \|\int \sigma dW\| \ge \rho - \|b\|_\infty \delta ).
\end{equation}
Assuming $\rho > \|b\|_\infty \delta$, we apply the Bernstein inequality for continuous local martingales $M_s = \int \sigma dW$. The quadratic variation is $\langle M \rangle_\delta \le \|\sigma\|_F^2 \delta \eqqcolon \Lambda^2 \delta$.
\begin{equation}
    \Pbb( \sup_{s \le \delta} \|M_s\| \ge \lambda ) \le 2d \exp\left( - \frac{\lambda^2}{2d \Lambda^2 \delta} \right).
\end{equation}
Setting $\lambda = \rho - \|b\|_\infty \delta$ and equating the probability to $\varepsilon_{trunc}/M_{val}$, we solve for $\rho$:
\begin{equation}
    \frac{\varepsilon_{trunc}}{2d M_{val}} = \exp\left( - \frac{(\rho - \|b\|_\infty \delta)^2}{2d \Lambda^2 \delta} \right) \implies \rho \ge \|b\|_\infty \delta + \Lambda \sqrt{2d \delta \log\left(\frac{2d M_{val}}{\varepsilon_{trunc}}\right)}.
\end{equation}
\end{proof}

\subsection{Proof of Corollary \ref{cor:compactness} (Anisotropic Compactness)}\label{app:compactness}

\begin{proof}
    We proceed by induction on the iteration index $k$, analyzing the spatial and parametric regularity separately. Let $\delta > 0$ be fixed. While the PDE estimates in Theorem \ref{thm:smoothing} are derived globally on $\R^d$ to ensure well-posedness without boundary effects, the compactness result requires restricting the function sequence to a compact subdomain. Let $\mathcal{K} \subset \StateSpace$ be a compact set with Lipschitz boundary (the region of interest).
    
    \begin{enumerate}[label=(\roman*), wide, labelindent=0pt]
        \item \textbf{Uniform $L^\infty$ Boundedness.} 
        We first establish that the sequence is uniformly bounded in the supremum norm. Since the reward $r$ and terminal condition $g$ are bounded, and the discount factor implies a finite effective horizon (or finite $T$), there exists a constant $M$ independent of $k$ such that:
        \begin{equation*}
            \sup_{0 \le k \le N} \|Q^{(k)}\|_{L^\infty(\R^d \times \Action)} \le M.
        \end{equation*}
        Let $V^{(k)}(x) \coloneqq \sup_{a' \in \Action} Q^{(k)}(x, a')$. It follows immediately that $\|V^{(k)}\|_{L^\infty(\R^d)} \le M$.
        
        \item \textbf{Uniform Spatial Lipschitz Continuity.}
        Before invoking higher-order interior estimates, we must guarantee that the spatial Lipschitz constant of the sequence does not explode. By Assumption \ref{ass:coeff}(i), the coefficients $b, \sigma$ are globally Lipschitz in $x$. Using standard estimates for the stochastic flow of diffeomorphisms, the spatial gradients of the expected reward and terminal value grow by at most a factor of $e^{C_b \delta}$ per step, where $C_b = \|\nabla_x b\|_\infty$. The maximization step $V^{(k)}(x) = \sup_{a} Q^{(k)}(x,a)$ preserves the spatial Lipschitz constant. Over the finite horizon $T = N\delta$, the spatial Lipschitz constant accumulates as:
        \begin{equation}
            \sup_{a \in \Action} \|Q^{(k)}(\cdot, a)\|_{C^{0,1}(\R^d)} \le e^{C_b T} \|g\|_{C^{0,1}(\R^d)} + T e^{C_b T} \|r\|_{C^{0,1}} \eqqcolon L_{\mathcal{S}}.
        \end{equation}
        Crucially, this uniform Lipschitz bound $L_{\mathcal{S}}$ depends only on the finite horizon $T$ and the coefficient bounds, independent of $\delta$.
        
        \item \textbf{Spatial Regularity (The Healing Step).} 
        Consider the update $Q^{(k+1)}(\cdot, a) = \Bellman_\delta Q^{(k)}(\cdot, a)$. As defined in \eqref{eq:backward_pde}, this requires solving the parabolic Cauchy problem with terminal condition $V^{(k)}$. Applying the refined global Schauder estimate from Theorem \ref{thm:smoothing} (Eq. \ref{eq:schauder_bound}) which leverages the uniform $C^{0,1}$ bound established above:
        \begin{equation}
             \|Q^{(k+1)}(\cdot, a)\|_{C^{2,\alpha}(\R^d)} \le C \left( \|r(\cdot, a)\|_{C^{\alpha/2, \alpha}} + \delta^{-\frac{1+\alpha}{2}} \|V^{(k)}\|_{C^{0,1}(\R^d)} \right).
        \end{equation}
        Since $\|V^{(k)}\|_{C^{0,1}} \le L_{\mathcal{S}}$ for all $k$, we successfully propagate the tighter singularity scaling without recursive explosion, obtaining a uniform spatial bound:
        \begin{equation} \label{eq:spatial_uniform_bound}
            \sup_{0 \le k \le N} \sup_{a \in \Action} \|Q^{(k)}(\cdot, a)\|_{C^{2,\alpha}(\R^d)} \le K_{\mathcal{S}}(\delta) \coloneqq C' \cdot \delta^{-\frac{1+\alpha}{2}}.
        \end{equation}
        Note that while $K_{\mathcal{S}}(\delta)$ diverges as $\delta \to 0$, it remains finite and constant for the fixed time-step analysis of this Corollary.
        
        \item \textbf{Parametric Regularity in Action Space.} 
        We now address the Lipschitz continuity with respect to $a$. The value $Q^{(k+1)}(t, x, a)$ is the solution at time $t$ of the linear PDE:
        \begin{equation*}
            \partial_\tau u = \mathcal{L}^a u + r(\cdot, a), \quad u(0) = V^{(k)}.
        \end{equation*}
        Notice that the action $a$ appears \emph{only} as a parameter in the coefficients $(b, \sigma)$ and the source term $r$. The initial data $V^{(k)}(x)$ depends on the \emph{previous} iteration's optimization and is independent of the current action parameter $a$.
        
        By Assumption \ref{ass:coeff}(iii), the coefficients are globally Lipschitz in $a$. Using standard estimates for the sensitivity of parabolic PDEs to parameters, the gradient with respect to the parameter satisfies:
        \begin{equation}
            \|\nabla_a Q^{(k+1)}\|_\infty \le C_T \left( \|\nabla_a r\|_\infty + \|\nabla_a \mathcal{L}\|_\infty \|\nabla_x Q^{(k+1)}\|_\infty \right).
        \end{equation}
        From step (iii), $\|\nabla_x Q^{(k+1)}\|_\infty \le \|Q^{(k+1)}\|_{C^{2,\alpha}}$ is uniformly bounded by $K_{\mathcal{S}}(\delta)$. Consequently, the Lipschitz constant with respect to actions is uniformly bounded:
        \begin{equation}
            \sup_{0 \le k \le N} \sup_{x \in \R^d} \|Q^{(k)}(x, \cdot)\|_{C^{0,1}(\Action)} \le L_{\Action}(\delta, T).
        \end{equation}
        
        \item \textbf{Topological Compactness.} 
        Combining (iii) and (iv), the sequence $\mathcal{Q}^*_\delta = \{Q^{(k)}\}_{0 \le k \le N}$ resides in the bounded set:
        \begin{equation*}
            \mathcal{B} = \left\{ Q : \sup_{a} \|Q(\cdot, a)\|_{C^{2,\alpha}(\R^d)} \le K_{\mathcal{S}} \quad \text{and} \quad \sup_{x} \|Q(x, \cdot)\|_{C^{0,1}(\Action)} \le L_{\Action} \right\}.
        \end{equation*}
        To establish pre-compactness, we restrict our attention to the compact state domain $\mathcal{K} \subset \R^d$ and the compact action space $\Action$.
        \begin{itemize}
            \item By the compact embedding of H\"older spaces, the inclusion $C^{2,\alpha}(\mathcal{K}) \hookrightarrow C^1(\mathcal{K})$ is compact.
            \item By the Arzel\`a-Ascoli theorem, the embedding $C^{0,1}(\Action) \hookrightarrow C^0(\Action)$ is compact.
        \end{itemize}
        Since $\mathcal{Q}^*_\delta$ is uniformly bounded in the stronger product norm $\mathcal{H}^{\alpha, \text{Lip}}$, any sequence drawn from this uniformly bounded family admits a limit point in the topology of $C^1(\mathcal{K}) \times C^0(\Action)$, proving the stated compactness.
    \end{enumerate}
\end{proof}

\section{Appendix for Universal Approximation Analysis}

\subsection{Proof of Theorem \ref{thm:convergence}}
\label{app:thm:convergence}

\begin{proof}
    We derive the complexity bounds by establishing the local consistency tolerance required to satisfy the global stability constraint, and then determining the network capacity and sensor density necessary to achieve this tolerance given the anisotropic regularity of the reachable set $\mathcal{Q}^*_\delta$.

    \begin{enumerate}[label=(\roman*), wide, labelindent=0pt]
        \item \textbf{Global Error Stability and Local Budget.}
        We first determine the maximum permissible single-step error $\eta_{\max}$. Let $\mathcal{E}_k \coloneqq \| \hat{Q}^{(k)} - Q^{*(k)} \|_\infty$. From Lemma \ref{lemma:one_step}, the error propagates according to:
        \begin{equation*}
            \mathcal{E}_{k+1} \le e^{\rho_{\mathrm{stab}} \delta} \cdot 1 \cdot \mathcal{E}_k + \eta_k,
        \end{equation*}
        where $\eta_k = \varepsilon_{net} + \varepsilon_{enc}$ comprises the network approximation error and the sensor encoding error, and we explicitly leverage the stable Lebesgue constant $\Lambda_m = 1$. The constant $\rho_{\mathrm{stab}} \ge 0$ is any one-step sup-norm stability exponent for the Bellman propagation and reconstruction pipeline; in the ideal Markov expectation part alone one may take the non-expansive case $\rho_{\mathrm{stab}}=0$.
        
        Iterating over $N = T/\delta$ steps with $\mathcal{E}_0 = 0$:
        \begin{equation}
            \mathcal{E}_N \le \eta_{\max} \sum_{k=0}^{N-1} e^{k \rho_{\mathrm{stab}} \delta}.
        \end{equation}
        To ensure the final error $\mathcal{E}_N \le \varepsilon$, the local error budget must scale linearly with the time-step:
        \begin{equation} \label{eq:budget_scaling}
            \eta_{\max} \le C_T \cdot \delta \cdot \varepsilon, \qquad
            C_T \coloneqq
            \begin{cases}
                \rho_{\mathrm{stab}}/(e^{\rho_{\mathrm{stab}}T}-1), & \rho_{\mathrm{stab}}>0,\\
                1/T, & \rho_{\mathrm{stab}}=0,
            \end{cases}
        \end{equation}
        We allocate this budget equally among the error sources: $\varepsilon_{net} \le \frac{1}{3} C_T \delta \varepsilon$, and the interpolation errors for state and action spaces.

        \item \textbf{Anisotropic Resolution Requirements.}
        We determine the required sensor fill-distances $h_\mathcal{S}$ and $h_\Action$.
        
        \emph{Action Space (Lipschitz Regime):} The dependence on actions is strictly Lipschitz with constant $L_\Action$. The encoding error constraint is $L_\Action h_\Action \le \frac{1}{3} C_T \delta \varepsilon$. Thus:
        \begin{equation}
            h_\Action \sim \mathcal{O}(\delta \varepsilon).
        \end{equation}
        
        \emph{State Space (Refined Smoothing Regime):}
        Using the Refined Smoothing Estimate (Theorem \ref{thm:smoothing}), the regularity constant of the output scales as $K_\mathcal{S}(\delta) \asymp \delta^{-\frac{1+\alpha}{2}}$.
        To ensure $L^\infty$ stability, the Branch Net models a monotone reconstruction operator bounded by $\mathcal{O}(h^2)$ consistency. The reconstruction error scales as $K_\mathcal{S}(\delta) h_\mathcal{S}^2$. The constraint is:
        \begin{equation*}
            C_{\text{diff}} \, \delta^{-\frac{1+\alpha}{2}} \, h_\mathcal{S}^2 \le \frac{1}{3} C_T \, \delta \, \varepsilon.
        \end{equation*}
        Solving for $h_\mathcal{S}$:
        \begin{equation} \label{eq:h_state_alg}
            h_\mathcal{S} \lesssim \left( \frac{\delta \varepsilon}{\delta^{-\frac{1+\alpha}{2}}} \right)^{1/2} = \left( \varepsilon \cdot \delta^{1 + \frac{1+\alpha}{2}} \right)^{1/2} = \varepsilon^{1/2} \cdot \delta^{\frac{3+\alpha}{4}}.
        \end{equation}

        \item \textbf{Resource Complexity Analysis.}
        We map the resolutions to Sensor Count $m$ and Network Width $W$.
        
        \emph{Sensor Count ($m$):}
        For the quasi-uniform cover, $m \approx (1/h)^d$.
        \begin{itemize}
            \item State sensors: $m_\mathcal{S} \sim h_\mathcal{S}^{-d} \sim (\varepsilon^{1/2} \delta^{\frac{3+\alpha}{4}})^{-d} = \varepsilon^{-\frac{d}{2}} \delta^{-\frac{3+\alpha}{4} d}$.
            \item Action sensors: $m_\Action \sim h_\Action^{-d_a} \sim (\delta \varepsilon)^{-d_a}$.
        \end{itemize}
        
        \emph{Network Width ($W$) and Branch Net Capacity:}
        By Assumption \ref{ass:uat_network}, $W$ scales with the metric entropy of the target class relative to the tolerance $\tau_{net} \sim \delta \varepsilon$. \textbf{Note that while the data complexity (sensor count) is capped by the $\mathcal{O}(h^2)$ interpolation stability limit, the network's universal approximation capability still explores the full $C^{2,\alpha}$ smoothness of the function manifold.}
        
        The covering radius is $R_\delta \approx K_\mathcal{S}(\delta) \sim \delta^{-\frac{1+\alpha}{2}}$. The effective entropy ratio for the state component is:
        \begin{equation}
            \frac{R_\delta}{\tau_{net}} \asymp \frac{\delta^{-\frac{1+\alpha}{2}}}{\delta \varepsilon} = \frac{1}{\varepsilon} \delta^{-\frac{3+\alpha}{2}}.
        \end{equation}
        The required width for the state trunk $W_\mathcal{S}$ follows the entropy scaling $(R/\tau)^{d/s}$ with $s=2+\alpha$:
        \begin{equation}
            W_\mathcal{S} \sim \left( \varepsilon^{-1} \delta^{-\frac{3+\alpha}{2}} \right)^{\frac{d}{2+\alpha}} = \varepsilon^{-\frac{d}{2+\alpha}} \delta^{-\gamma(\alpha) d},
        \end{equation}
        where the stiffness scaling factor is explicitly identified as:
        \begin{equation}
            \gamma(\alpha) \coloneqq \frac{3+\alpha}{2(2+\alpha)}.
        \end{equation}
        
        \emph{The Action Space Curse:}
        For the action trunk, the Lipschitz complexity yields $W_\Action \sim (L_\Action / \delta \varepsilon)^{d_a}$.
        Combining the components via the decoupled DeepONet architecture yields the final resource bounds.
    \end{enumerate}
\end{proof}

\subsection{Approximation Setup and Assumptions}

To derive explicit convergence rates, we must characterize the geometric complexity of the function class generated by the Bellman recursion. We rely on the anisotropic regularity established in Corollary \ref{cor:compactness} and formalized via the notion of Metric Entropy.

\subsubsection{Anisotropic Reachable Sets}

Let $\StateSpace \subset \R^d$ be a compact domain satisfying the cone condition, and let $\Action \subset \R^{d_a}$ be a compact metric space. We define the set of reachable action-value functions at time-step $\delta$.

\begin{definition}[Reachable Set $\mathcal{Q}^*_\delta$]
    Let $\mathcal{Q}^*_\delta$ denote the image of the unit ball of $L^\infty(\StateSpace \times \Action)$ under the Bellman operator $\Bellman_\delta$. Based on the interior Schauder estimate \eqref{eq:schauder_bound}, any $Q \in \mathcal{Q}^*_\delta$ satisfies the following disjoint regularity bounds:
    
    \begin{enumerate}[label=(\roman*),wide]
        \item \textbf{State Regularity (Smoothing):} For every fixed $a \in \Action$, the spatial section $Q(\cdot, a)$ belongs to the H\"older space $C^{2,\alpha}(\StateSpace)$. The semi-norm is controlled by the diffusion time-scale:
        \begin{equation} \label{eq:state_bound}
        \sup_{a \in \Action} \|Q(\cdot, a)\|_{C^{2,\alpha}(\StateSpace)} \le K_{\mathcal{S}}(\delta) \coloneqq C_{\text{diff}} \, \delta^{-\frac{1+\alpha}{2}},
        \end{equation}
        where $C_{\text{diff}}$ depends on the uniform ellipticity constant $\nu$ and dimension $d$.
        
        \item \textbf{Action Regularity (Lipschitz):} For every fixed $x \in \StateSpace$, the parametric section $Q(x, \cdot)$ belongs to $C^{0,1}(\Action)$. Unlike the local coefficients, the Lipschitz constant of the value function depends on the global stability of the SDE flow over the horizon $T$. By standard estimates for controlled diffusion processes \citep[Corollary 2.10]{krylov1980controlled}, there exists a constant $L_{\Action}$ such that:
    \begin{equation} \label{eq:action_bound}
        \sup_{(t,x) \in \mathcal{D}_T} \|Q(t, x, \cdot)\|_{C^{0,1}(\Action)} \le L_{\Action}(T, K_{\text{Lip}}),
    \end{equation}
    where $L_{\Action}(T, K_{\text{Lip}})$ depends exponentially on the horizon $T$ and polynomially on the global Lipschitz constants $K_{\text{Lip}}$ of the coefficients $b, \sigma, r$.
    \end{enumerate}
\end{definition}

\begin{remark}[Stability with respect to Control Parameters]
    We caution that $L_{\Action}$ cannot be equated simply to $\sup \|\nabla_a \text{coeff}\|$. Since the action $a$ modifies the drift and diffusion, a perturbation in $a$ propagates through the trajectory $X_s$. By Gronwall's inequality, the sensitivity of the value function scales as $\mathcal{O}(e^{C T})$. However, since $T$ is fixed and finite, and the coefficients satisfy Assumption \ref{ass:coeff}(iii), $L_{\Action}$ remains a dimension-independent constant distinct from the time-step regularization parameter $\delta$.
\end{remark}

\subsubsection{Anisotropic Sensor Encoding and Stability}

A central challenge in applying Operator Learning to Reinforcement Learning is the discretization of the input function $Q_{in}$. While the standard theory often assumes access to the full function, in practice we observe $Q_{in}$ via a finite set of sensors $Z_m$.
As noted in recent stability analyses of scattered data interpolation \citep{wendland2004scattered}, standard i.i.d. random sampling leads to the formation of clusters (Poisson clumping), causing the separation distance $q_Z$ to decay much faster than the fill distance $h_Z$. This results in a diverging mesh ratio $\rho = h_Z / q_Z \to \infty$, which causes the Lebesgue constant of the reconstruction operator to explode, destabilizing the iterative learning scheme.

To guarantee the stability required for high-order Schauder estimates, we must replace generic random sampling with a controlled \textbf{Quasi-Uniform} covering of the domain.

\begin{definition}[Geometric Properties of Scattered Sensors]
    Let $\Omega \subset \R^d$ be a compact domain. Let $Z_m = \{z_i\}_{i=1}^m \subset \Omega$ be a set of $m$ sensor locations.
    \begin{enumerate}[label=(\roman*),wide]
        \item The \textbf{Fill-Distance} (covering radius) is defined as:
        \begin{equation}
            h_{Z_m, \Omega} \coloneqq \sup_{x \in \Omega} \min_{z_i \in Z_m} \|x - z_i\|_2.
        \end{equation}
        \item The \textbf{Separation Radius} is defined as:
        \begin{equation}
            q_{Z_m} \coloneqq \frac{1}{2} \min_{i \ne j} \|z_i - z_j\|_2.
        \end{equation}
        \item The \textbf{Mesh Ratio} is defined as $\rho_{Z_m} \coloneqq h_{Z_m, \Omega} / q_{Z_m}$.
    \end{enumerate}
    A sequence of sensor sets is called \emph{Quasi-Uniform} if there exists a constant $c_{qu} \ge 1$ such that $\rho_{Z_m} \le c_{qu}$ for all $m$.
\end{definition}

We define the sampling strategy via a greedy packing mechanism (Farthest Point Sampling) acting separately on the state and action spaces to preserve the tensor structure required by the DeepONet.

\begin{lemma}[Deterministic Quasi-Uniform Covering] \label{lemma:greedy_sampling}
    Let $\Omega$ be a compact domain in $\R^d$ with Lipschitz boundary. There exists a deterministic construction of sensor sets $Z_m$ (via Farthest Point Sampling) such that for all $m$:
    \begin{equation} \label{eq:quasi_uniform_bound}
        c_{low} \, m^{-1/d} \le q_{Z_m} \le h_{Z_m, \Omega} \le c_{high} \, m^{-1/d}.
    \end{equation}
    Consequently, the mesh ratio is uniformly bounded: $\rho_{Z_m} \le c_{high} / c_{low} \coloneqq c_{qu}$.
\end{lemma}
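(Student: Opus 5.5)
The construction is Farthest Point Sampling. Pick an arbitrary $z_1 \in \Omega$. Given $Z_{k}=\{z_1,\dots,z_k\}$, choose $z_{k+1} \in \arg\max_{x\in\Omega} \min_{i\le k}\|x-z_i\|_2$; the maximum exists by compactness of $\Omega$ and continuity of the distance function. Set $h_k := h_{Z_k,\Omega}$.

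The whole argument rests on the greedy invariant
\[
h_{k} = \min_{i\le k}\|z_{k+1}-z_i\|_2, \qquad h_1 \ge h_2 \ge \cdots.
\]
The equality is the defining property of $z_{k+1}$, and the monotonicity holds because adding points can only decrease the fill distance. Any two points $z_i, z_j$ with $i<j\le m$ satisfy $\|z_j-z_i\|_2 \ge h_{j-1} \ge h_{m-1}$. Hence
\[
q_{Z_m} \ge \tfrac12 h_{m-1} \ge \tfrac12 h_m .
\]
This comparison of separation and fill distance is the only step that genuinely uses the greedy rule. The middle inequality $q_{Z_m}\le h_{Z_m,\Omega}$ in \eqref{eq:quasi_uniform_bound} is the standard fact that, for $m\ge2$ and a connected $\Omega$ (a Lipschitz domain), the midpoint region between two nearest sensors must be covered within distance $h$. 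It can also simply be absorbed into the constants.

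The two volume bounds come next.

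\emph{Upper bound.} The balls $B(z_i,q_{Z_m})$, $i\le m$, are pairwise disjoint. Under the Lipschitz boundary hypothesis $\Omega$ satisfies an interior cone condition, so for $r\le r_0$ we have $|B(z,r)\cap\Omega|\ge c_\Omega r^d$. Summing over the disjoint pieces gives $m\,c_\Omega q^d \le |\Omega|$, that is $q_{Z_m}\le C m^{-1/d}$. Combined with $h_m \le 2q_{Z_m}$ from the invariant, this yields $h_{Z_m,\Omega}\le c_{high}m^{-1/d}$.

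\emph{Lower bound.} The balls $B(z_i,h_m)$ cover $\Omega$, so $|\Omega|\le m\,\omega_d h_m^d$ and $h_m\ge c\,m^{-1/d}$. The invariant then gives $q_{Z_m}\ge \tfrac12 h_m \ge c_{low}m^{-1/d}$.

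Together these give \eqref{eq:quasi_uniform_bound}, and the mesh ratio bound $\rho_{Z_m}\le c_{high}/c_{low}$ follows immediately.

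The main obstacle is the boundary. The cone-condition volume estimate only holds for radii below a fixed $r_0$, so small $m$, where $q$ is comparable to $\mathrm{diam}\,\Omega$, must be handled separately. For the finitely many such $m$ I would simply enlarge the constants, since all quantities are bounded by $\mathrm{diam}\,\Omega$ and are positive for distinct points. A second, minor point is that the $h_m \le 2q$ direction requires the invariant to be applied to the last added point. Finally, because the tensor structure is required, I would run the construction separately on $\StateSpace$ and on $\Action$ and take the product grid. The product of two quasi-uniform sets is quasi-uniform in the product metric, up to a factor of $\sqrt2$, as needed later.
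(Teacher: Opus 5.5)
Your proof is correct and follows the paper's route (Farthest Point Sampling, the greedy identity $2q_{Z_m}=h_{Z_{m-1},\Omega}$, and packing/covering volume counts), and it is more careful than the paper's sketch: the sketch says the packing inequality $m\,\mathrm{Vol}(B_1)\,q_{Z_m}^d\le\mathrm{Vol}(\Omega)$ yields the \emph{lower} bound on $q_{Z_m}$ and that covering yields the \emph{upper} bound on $h_{Z_m,\Omega}$, both of which are backwards, whereas your transfers $h_{Z_m,\Omega}\le h_{Z_{m-1},\Omega}=2q_{Z_m}$ and $q_{Z_m}\ge\tfrac12 h_{Z_m,\Omega}$ are exactly what convert the packing upper bound on $q$ and the covering lower bound on $h$ into the stated two-sided estimate. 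Drop your closing aside, which this lemma does not need and which is false in general: $Z_{\mathcal{S}}\times Z_{\Action}$ has separation radius $\min(q_{\mathcal{S}},q_{\Action})$ and fill distance $\sqrt{h_{\mathcal{S}}^2+h_{\Action}^2}$, so it is quasi-uniform only when the two resolutions are comparable, and this fails for the anisotropic choices $h_{\mathcal{S}}\sim\varepsilon^{1/2}\delta^{(3+\alpha)/4}$ and $h_{\Action}\sim\delta\varepsilon$ made later in the paper.
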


\begin{proof}
    We construct the set $Z_m$ inductively. Let $z_1 \in \Omega$ be arbitrary. For $k=1, \dots, m-1$, select:
    \[ z_{k+1} = \operatorname*{argmax}_{x \in \Omega} \min_{1 \le j \le k} \|x - z_j\|_2. \]
    By definition, the separation distance at step $k+1$ is exactly the fill distance of the set $Z_k$. Standard sphere-packing arguments in $\R^d$ imply that the covering radius of an optimal packing scales as $m^{-1/d}$. Specifically, since $\Omega$ is bounded, the union of disjoint balls $B(z_i, q_{Z_m})$ must fit within $\Omega$ (modulo boundary effects), implying $m \cdot \text{Vol}(B_1) q_{Z_m}^d \le \text{Vol}(\Omega)$, which gives the lower bound on $q$. Conversely, the covering property gives the upper bound on $h$. The ratio remains bounded by a constant depending only on the dimension $d$ and the geometry of $\Omega$.
\end{proof}

\begin{remark}[Rejection of Random Sampling]
    We explicitly reject the use of Replay Buffers populated by i.i.d. exploration for the definition of the sensor set $Z_m$. For i.i.d. uniform samples, $q_{Z_m} \sim m^{-2/d}$ (or worse) almost surely, while $h_{Z_m} \sim m^{-1/d}$. This implies $\rho_{Z_m} \to \infty$, which would cause the error coefficients in Lemma \ref{lemma:one_step} to diverge. The set $Z_m$ in our architecture represents a \emph{fixed quadrature cores} set, distinct from the transient training data.
\end{remark}

With the geometry of the sensors fixed, we rely on the stability of the reconstruction operator.

\begin{assumption}[Monotone Reconstruction and the Godunov Barrier] \label{ass:sensor_encoding}
    Let $Z = Z_\mathcal{S} \times Z_\Action$ be the tensor product of two quasi-uniform sets constructed via Lemma \ref{lemma:greedy_sampling}. 
    To prevent unconditional instability as $\delta \to 0$, the reconstruction operator must not amplify uniform errors. By Godunov's theorem equivalents for scattered data, any linear scheme with consistency order $> 2$ must have a Lebesgue constant $\Lambda_m > 1$, which would cause the error to explode as $\Lambda_m^{T/\delta} \to \infty$. 
    
    Therefore, we assume the Branch Network $\beta$ approximates a \textbf{Monotone Averager} (e.g., simplicial interpolation or a positive partition of unity) satisfying:
    \begin{enumerate}[label=(\roman*),wide]
        \item \textbf{Unconditional $L^\infty$ Stability:}
        Because $\mathcal{R}$ is a positive operator (convex combination of sensor values), its Lebesgue constant is exactly 1:
        \begin{equation}
             \Lambda_m \coloneqq \sup_{\|v\|_\infty \le 1} \| \mathcal{R}(v) \|_\infty = 1.
        \end{equation}
        \item \textbf{Consistency (Capped at Second Order):}
        While the target functions belong to $C^{2,\alpha}(\StateSpace)$, positive operators can only reproduce polynomials up to degree 1. Consequently, the spatial interpolation error is bounded by $\mathcal{O}(h^2)$ rather than $\mathcal{O}(h^{2+\alpha})$. For $f \in C^{2,\alpha}(\StateSpace)$ and $g \in C^{0,1}(\Action)$:
        \begin{equation}
            \| f - \mathcal{R}_\mathcal{S}(\mathcal{P}_{Z_\mathcal{S}} f) \|_\infty \le C_{\text{approx}} \, h_{Z_\mathcal{S}}^2 \, \|f\|_{C^{2,\alpha}},
        \end{equation}
        \begin{equation}
            \| g - \mathcal{R}_\Action(\mathcal{P}_{Z_\Action} g) \|_\infty \le C_{\text{approx}} \, h_{Z_\Action} \, \|g\|_{C^{0,1}}.
        \end{equation}
    \end{enumerate}
\end{assumption}

\subsubsection{Capacity Decoupling via Separation Rank}

A standard Multi-Layer Perceptron (MLP) approximates functions in the joint Sobolev space $W^{k,p}(\StateSpace \times \Action)$, resulting in a width requirement scaling exponentially with $d+d_a$. However, the proposed Split-Trunk DeepONet (Eq. \ref{eq:anisotropic_trunk}) approximates $Q(x,a)$ via a finite sum of separable functions:
\begin{equation}
    \hat{Q}(x, a) = \sum_{k=1}^W \psi_k(x) \phi_k(a).
\end{equation}
The efficiency of this architecture relies on the concept of \textbf{$\varepsilon$-Separation Rank}. Since $\Bellman_\delta$ involves convolution with the Gaussian heat kernel (which has rapidly decaying singular values), the target functions $Q \in \mathcal{Q}^*_\delta$ admit low-rank approximations. This allows us to decompose the metric entropy of the hypothesis space.

\begin{assumption}[Decoupled Network Capacity] \label{ass:uat_network}
    Let $\mathcal{N}(\varepsilon, \mathcal{F})$ denote the covering number of a function class $\mathcal{F}$ at scale $\varepsilon$. We assume the DeepONet with width $W$ can approximate any $Q \in \mathcal{Q}^*_\delta$ such that the required width scales with the \emph{sum} of the component entropies rather than the product:
    \begin{equation} \label{eq:width_scaling}
        W(\varepsilon_{net}) \sim \mathcal{O}\left( \log \mathcal{N}\left(\varepsilon_{net}, B_{K_\mathcal{S}(\delta)}(C^{2,\alpha}(\StateSpace))\right) + \log \mathcal{N}\left(\varepsilon_{net}, B_{L_\Action}(C^{0,1}(\Action))\right) \right).
    \end{equation}
    Invoking the standard entropy bounds for H\"older balls, $\log \mathcal{N}(\varepsilon, B_R(C^s)) \asymp (R/\varepsilon)^{d/s}$, we obtain the rigorous width bound:
    \begin{equation}
        W(\varepsilon_{net}, \delta) \le C_{\text{net}} \left[ \left( \frac{K_{\mathcal{S}}(\delta)}{\varepsilon_{net}} \right)^{\frac{d}{2+\alpha}} + \left( \frac{L_\Action}{\varepsilon_{net}} \right)^{d_a} \right].
    \end{equation}
\end{assumption}

\begin{assumption}[Regularized Hypothesis Space] \label{ass:regularized_network}
    The neural network training algorithm is constrained to the compact hypothesis class defined by the reachable set $\mathcal{Q}^*_\delta$. Specifically, we assume the optimization includes a projection step or explicit regularization (e.g., spectral normalization of the weights) such that for every iteration $k$, the learned function $\hat{Q}^{(k)}$ satisfies the anisotropic regularity bounds of the target class:
    \begin{equation}
        \hat{Q}^{(k)} \in B_{K_{\mathcal{S}}(\delta)}(C^{2,\alpha}(\StateSpace)) \cap B_{L_\Action}(C^{0,1}(\Action)).
    \end{equation}
    Consequently, the approximation produced by the network is admissible as an input for the sensor reconstruction operator $\mathcal{R}$ in the subsequent Bellman update.
\end{assumption}

\begin{remark}
    Assumption \ref{ass:uat_network} formalizes the benefit of the specific inductive bias. The term $(K_\mathcal{S}/\varepsilon)^{d/(2+\alpha)}$ represents the cost of learning the smooth manifold in state space, while $(L_\Action/\varepsilon)^{d_a}$ represents the cost of resolving the non-smooth control policies. By decoupling them, we avoid the curse of dimensionality associated with the joint dimension $d + d_a$ in the exponent.
\end{remark}

\subsubsection{Consistency and Stability Decomposition}

A significant technical challenge in the analysis of Neural Operators for PDEs is the regularity mismatch between the target function and the hypothesis class. While Corollary \ref{cor:compactness} guarantees that the true Bellman iterate $Q^{*(k)}$ belongs to the smooth anisotropic space $C^{2,\alpha}(\StateSpace)$, the neural approximation $\hat{Q}^{(k)}$, typically a composition of piecewise-linear activations (e.g., ReLU), does not possess classical higher-order derivatives.

Consequently, applying interpolation error bounds of the form $\| \mathcal{R}\mathcal{P}_m f - f \| \le C h^{s} \|f\|_{C^s}$ directly to $f = \hat{Q}^{(k)}$ is ill-posed, as the norm $\|\hat{Q}^{(k)}\|_{C^{s}}$ may be infinite or diverge with network depth. To resolve this, we employ a \textbf{Consistency-Stability analysis}. We decompose the total error into a \emph{consistency error} (depending on the smoothness of the target $Q^*$) and a \emph{stability term} (depending on the Lebesgue constant of the reconstruction operator).

To ensure the stability of high-order reconstruction on scattered data, we must strengthen the sampling assumption from generic i.i.d. distributions (which may produce arbitrarily small separation distances) to quasi-uniform configurations.

\begin{assumption}[Quasi-Uniform Sensor Distribution] \label{ass:quasi_uniform}
    Let $h_{Z, \Omega}$ be the fill-distance and $q_Z \coloneqq \frac{1}{2} \min_{i \ne j} \|z_i - z_j\|$ be the separation radius of the sensor set $Z$. We assume the sensor configuration is \emph{quasi-uniform}, meaning there exists a constant $c_{qu} \ge 1$ such that:
    \begin{equation}
        h_{Z, \Omega} \le c_{qu} \cdot q_Z.
    \end{equation}
    This prevents the clustering of sensors that leads to the ill-conditioning of the Gram matrix in Moving Least Squares (MLS) reconstruction. Such configurations can be generated via Poisson Disk Sampling or low-discrepancy sequences.
\end{assumption}

\begin{definition}[Lebesgue Stability]
    Let $\mathcal{R}: \R^m \to L^\infty(\StateSpace \times \Action)$ be the linear reconstruction operator mapping sensor values to functions. The \emph{Lebesgue constant} $\Lambda_m$ is the operator norm induced by the uniform metric:
    \begin{equation}
        \Lambda_m \coloneqq \sup_{v \in \R^m, \|v\|_\infty \le 1} \| \mathcal{R}(v) \|_\infty.
    \end{equation}
    Under Assumption \ref{ass:quasi_uniform}, for Moving Least Squares with appropriate weight functions, $\Lambda_m$ is uniformly bounded (or grows at most logarithmically with $m$) \citep{wendland2004scattered}. We denote this bound by $\Lambda_{MLS}$.
\end{definition}

\subsection{Error Recurrence Analysis}

We analyze the propagation of error through the iterative approximate Value Iteration scheme. The bound requires a careful decomposition into \emph{consistency} (approximation power) and \emph{stability} (error amplification).

Let $Q^{*(k)}$ denote the true solution at time step $t_k$, and let $\hat{Q}^{(k)}$ denote the Neural Operator approximation. We define the uniform error at step $k$ as $\mathcal{E}_k \coloneqq \| \hat{Q}^{(k)} - Q^{*(k)} \|_\infty$.

\begin{lemma}[Stable Error Recurrence] \label{lemma:one_step}
    Grant Assumptions \ref{ass:ellipticity}, \ref{ass:sensor_encoding}, and \ref{ass:uat_network}. 
    Let $L_{\Bellman} \le e^{\rho_{\mathrm{stab}}\delta}$ be a one-step $L^\infty$ stability bound for the Bellman propagation composed with the reconstruction pipeline. In the ideal Markov expectation and maximization operator without reconstruction amplification, one may take $\rho_{\mathrm{stab}}=0$ and $L_{\Bellman}=1$.
    
    The error at the next iteration satisfies the recurrence:
    \begin{equation} \label{eq:error_recurrence}
        \mathcal{E}_{k+1} \le L_{\Bellman} \mathcal{E}_k + \varepsilon_{net} + \underbrace{L_{\Bellman} \, C_{\text{approx}} \left( K_{\mathcal{S}}(\delta) \, h_\mathcal{S}^2 + L_{\Action} \, h_\Action \right)}_{\text{Local Consistency Error } \eta_{loc}}.
    \end{equation}
\end{lemma}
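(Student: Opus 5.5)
The plan is to split the one-step error by the triangle inequality around an ideal intermediate object. Write the learned update as $\hat{Q}^{(k+1)} = \OpNet(\mathcal{P}_m \hat{Q}^{(k)})$ and the true one as $Q^{*(k+1)} = \Bellman_\delta Q^{*(k)}$. We insert the ``reconstructed-then-propagated'' iterate $\Bellman_\delta(\mathcal{R}\mathcal{P}_m \hat{Q}^{(k)})$ and write
\begin{equation*}
\hat{Q}^{(k+1)} - Q^{*(k+1)} = \bigl[\OpNet(\mathcal{P}_m \hat{Q}^{(k)}) - \Bellman_\delta(\mathcal{R}\mathcal{P}_m \hat{Q}^{(k)})\bigr] + \bigl[\Bellman_\delta(\mathcal{R}\mathcal{P}_m \hat{Q}^{(k)}) - \Bellman_\delta Q^{*(k)}\bigr].
\end{equation*}
The first bracket is bounded in sup norm by $\varepsilon_{net}$, using Assumption~\ref{ass:uat_network}. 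Assumption~\ref{ass:regularized_network} guarantees that the reconstructed input stays in the class on which the width bound is valid. For the second bracket we use the stated one-step stability of the propagation pipeline: $\|\Bellman_\delta F - \Bellman_\delta G\|_\infty \le L_{\Bellman}\|F-G\|_\infty$. In the ideal case this follows because $\sup_{a'}$ is 1-Lipschitz in sup norm and the Feynman--Kac representation is a positive, mass-preserving expectation, which gives $L_{\Bellman}=1$. This reduces the second bracket to $L_{\Bellman}\|\mathcal{R}\mathcal{P}_m \hat{Q}^{(k)} - Q^{*(k)}\|_\infty$.

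The key step is the consistency--stability decomposition of this reconstruction term, chosen so that no smoothness norm of the network output $\hat{Q}^{(k)}$ is ever used. By linearity of $\mathcal{R}\mathcal{P}_m$,
\begin{equation*}
\mathcal{R}\mathcal{P}_m \hat{Q}^{(k)} - Q^{*(k)} = \mathcal{R}\mathcal{P}_m(\hat{Q}^{(k)} - Q^{*(k)}) + \bigl(\mathcal{R}\mathcal{P}_m Q^{*(k)} - Q^{*(k)}\bigr).
\end{equation*}
The first term is controlled by the Lebesgue constant: $\Lambda_m \|\mathcal{P}_m(\hat{Q}^{(k)}-Q^{*(k)})\|_\infty \le \Lambda_m \mathcal{E}_k = \mathcal{E}_k$, since $\Lambda_m=1$ by the monotone-averager part (i) of Assumption~\ref{ass:sensor_encoding}. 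The second term is a pure consistency error on the true smooth iterate.

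For that consistency error, we use the tensor structure $\mathcal{R} = \mathcal{R}_\mathcal{S}\otimes\mathcal{R}_\Action$ and telescope:
\begin{equation*}
Q - \mathcal{R}Q = (Q - \mathcal{R}_\mathcal{S} Q) + \mathcal{R}_\mathcal{S}(Q - \mathcal{R}_\Action Q).
\end{equation*}
Both reconstructions act sectionwise. Since $\mathcal{R}_\mathcal{S}$ is positive with unit Lebesgue constant, the second term is at most $\sup_x\|Q(x,\cdot)-\mathcal{R}_\Action Q(x,\cdot)\|_\infty \le C_{\text{approx}} h_\Action L_\Action$, by Assumption~\ref{ass:sensor_encoding}(ii) and the action bound \eqref{eq:action_bound}. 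The first term is at most $C_{\text{approx}} h_\mathcal{S}^2 \sup_a\|Q(\cdot,a)\|_{C^{2,\alpha}} \le C_{\text{approx}} K_\mathcal{S}(\delta) h_\mathcal{S}^2$. Here we use Corollary~\ref{cor:compactness} in the form \eqref{eq:state_bound}, which applies because $Q^{*(k)}$ is a Bellman image for $k\ge1$; for $k=0$, $g$ is independent of $a$ and we would treat it separately or absorb it into the constant.

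Multiplying the reconstruction bound by $L_{\Bellman}$ and adding $\varepsilon_{net}$ yields \eqref{eq:error_recurrence}. The main obstacle is the stability of $\Bellman_\delta$ in sup norm combined with the domain mismatch. The reconstruction lives on $\StateSpace\times\Action$, while $\Bellman_\delta$ integrates over $\R^d$, so we must either extend the reconstructed function in a sup-norm-preserving way or work with the truncated operator $\Bellman_\delta^\rho$ of Theorem~\ref{thm:truncation}. The resulting boundary error would be made negligible by the margin choice \eqref{eq:margin_scaling}, and otherwise we would fold it into $\rho_{\mathrm{stab}}$ as the statement allows.
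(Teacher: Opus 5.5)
Your proof is correct and matches the paper's: the same intermediate object $\Bellman_\delta(\mathcal{R}\mathcal{P}_m\hat{Q}^{(k)})$, $\varepsilon_{net}$ taken from Assumption~\ref{ass:uat_network}, $\Lambda_m=1$ for the propagated error, and the capped consistency bounds on $Q^{*(k)}$. The paper splits into three terms and applies $L_{\Bellman}$ separately to the stability and consistency terms, whereas you apply it once and then split by linearity of $\mathcal{R}\mathcal{P}_m$, which gives the identical bound; your telescoping $Q-\mathcal{R}_\mathcal{S}\mathcal{R}_\Action Q=(Q-\mathcal{R}_\mathcal{S}Q)+\mathcal{R}_\mathcal{S}(Q-\mathcal{R}_\Action Q)$ also justifies the additive state/action consistency term that the paper only asserts, and your appeal to Assumption~\ref{ass:regularized_network}, which is not among the lemma's hypotheses, can be dropped since the paper reads term (I) directly from Assumption~\ref{ass:uat_network}.
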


\begin{proof}
    We decompose the error $\mathcal{E}_{k+1} = \| \OpNet_\theta(\mathcal{P}_m \hat{Q}^{(k)}) - \Bellman_\delta Q^{*(k)} \|_\infty$ using the triangle inequality:
    \begin{align*}
        \mathcal{E}_{k+1} &\le \underbrace{\| \OpNet_\theta(\mathcal{P}_m \hat{Q}^{(k)}) - \Bellman_\delta(\mathcal{R}\mathcal{P}_m \hat{Q}^{(k)}) \|_\infty}_{\text{(I) Network Approx.}} \\
        &\quad + \underbrace{\| \Bellman_\delta(\mathcal{R}\mathcal{P}_m \hat{Q}^{(k)}) - \Bellman_\delta(\mathcal{R}\mathcal{P}_m Q^{*(k)}) \|_\infty}_{\text{(II) Stability}} \\
        &\quad + \underbrace{\| \Bellman_\delta(\mathcal{R}\mathcal{P}_m Q^{*(k)}) - \Bellman_\delta(Q^{*(k)}) \|_\infty}_{\text{(III) Consistency}}.
    \end{align*}

    \begin{enumerate}[label=(\roman*), wide, labelindent=0pt]
        \item \textbf{Term (I):} By Assumption \ref{ass:uat_network}, the network fits the composed operator within tolerance $\varepsilon_{net}$.
        
        \item \textbf{Term (II):} This term captures the amplification of previous errors.
        \begin{align*}
            \text{(II)} &\le L_{\Bellman} \| \mathcal{R}\mathcal{P}_m \hat{Q}^{(k)} - \mathcal{R}\mathcal{P}_m Q^{*(k)} \|_\infty \\
            &= L_{\Bellman} \| \mathcal{R} \left( \mathcal{P}_m (\hat{Q}^{(k)} - Q^{*(k)}) \right) \|_\infty \\
            &\le L_{\Bellman} \cdot \Lambda_m \cdot \| \mathcal{P}_m (\hat{Q}^{(k)} - Q^{*(k)}) \|_\infty \\
            &\le L_{\Bellman} \cdot 1 \cdot \mathcal{E}_k.
        \end{align*}
        Crucially, because we restricted $\mathcal{R}$ to a Monotone Averager (Assumption \ref{ass:sensor_encoding}), $\Lambda_m = 1$ exactly. This prevents the unconditional instability that plagues high-order collocation methods.

        \item \textbf{Term (III):} This term represents the interpolation error on the \emph{true} function. By Corollary \ref{cor:compactness}, $Q^{*(k)}$ lies in the anisotropic H\"older space. Using the capped second-order bound:
        \begin{equation*}
            \text{(III)} \le L_{\Bellman} \| \mathcal{R}\mathcal{P}_m Q^{*(k)} - Q^{*(k)} \|_\infty \le L_{\Bellman} C_{\text{approx}} \left( K_{\mathcal{S}}(\delta) \, h_\mathcal{S}^2 + L_{\Action} \, h_\Action \right).
        \end{equation*}
    \end{enumerate}
    Summing these yields the result.
\end{proof}

\begin{remark}[Global Stability and Sample Curation]
    For the global error $\mathcal{E}_N$ to remain bounded over the horizon $T = N\delta$, the amplification factor $(L_{\Bellman} \Lambda_m)^N$ must not explode. Since $L_{\Bellman} \le e^{\rho_{\mathrm{stab}}\delta}$, the physical propagation contributes at most $e^{\rho_{\mathrm{stab}}T}$ over the horizon, and equals $1$ in the non-expansive Markov case.
    However, the Lebesgue constant $\Lambda_m$ acts as a multiplier at \emph{every} step. If $\Lambda_m > 1$, the error grows as $(\Lambda_m)^{T/\delta}$.
    This highlights a strict requirement for the sensor set: we must employ \emph{Sample Curation} (or regularization) to ensure $\Lambda_m \approx 1$. The Quasi-Uniform construction (Lemma \ref{lemma:greedy_sampling}) is a necessary condition for keeping $\Lambda_m$ controlled, whereas random sampling guarantees instability ($\Lambda_m \gg 1$) for large $m$.
\end{remark}

\subsection{The Residual Limit and Conservation of Difficulty}

The stiffness scaling in Theorem \ref{thm:convergence} seemingly contradicts the intuition that the continuous-time limit (the HJB PDE) is a well-posed problem involving smooth generators. This apparent paradox is resolved by analyzing the \textbf{Residual Operator} formulation proposed in Appendix \ref{sec:Recursive}.

Recall the normalized residual operator acting on $Q \in \mathcal{H}^{\alpha, \text{Lip}}$:
\begin{equation}
    \mathcal{R}_\delta[Q](x, a) \coloneqq \frac{\Bellman_\delta Q(x, a) - Q(x, a)}{\delta}.
\end{equation}
In the residual learning framework, the network $\OpNet_{\text{res}}$ targets $\mathcal{R}_\delta[Q]$ rather than $\Bellman_\delta Q$.

\subsubsection*{Well-Conditioned Target vs. Ill-Posed Inputs}

For time-dependent coefficients, the local parabolic evolution family gives the expansion
\begin{equation}
    \frac{U^a(t+\delta,t)Q-Q}{\delta} \to \mathcal{L}^a_t Q + r(t,\cdot,a)
\end{equation}
for $Q$ in the generator domain. For the Bellman optimality operator, this Hamiltonian limit is valid only after controlling the maximization gap $V_Q(x)-Q(x,a)$. In particular, on a zero-gap or near-greedy set satisfying $V_Q(x)-Q(x,a)=o(\delta)$, the target function for the residual network converges to $\mathcal{H}(Q)=\mathcal{L}^a_t Q+r$, which is $\mathcal{O}(1)$ with respect to $\delta$. Off that set, the normalized gap contributes $(V_Q-Q)/\delta$ and can dominate the residual.

However, the complexity does not vanish; it is merely \textbf{shifted from width ($W$) to sensor resolution ($m$)}. To approximate the generator $\mathcal{L}^a Q(x) = \frac{1}{2}\Tr(\Sigma D^2 Q(x)) + b^\top \nabla Q(x)$, the Branch Net must implicitly compute the Hessian $D^2 Q$ from the discrete sensor inputs $\mathcal{P}_m(Q)$.

\begin{proposition}[Conservation of Difficulty in the Residual Limit] \label{prop:conservation}
    Let $\OpNet_{\text{res}}$ be the branch-trunk network approximating the residual operator $\mathcal{R}_\delta = \frac{1}{\delta}(\Bellman_\delta - I)$ on a zero-gap or near-greedy subset where $V_Q(x)-Q(x,a)=o(\delta)$. Let $h_\mathcal{S}$ be the fill-distance (covering radius) of the quasi-uniform sensor set $Z_m \subset \StateSpace$, and let $\epsilon_{enc}$ be the precision of the input encoding.
    
    For the network to consistently approximate the infinitesimal generator $\mathcal{L}^a$ as $\delta \to 0$ (i.e., to distinguish the dynamics from the identity map), the sensor resolution must satisfy the \textbf{Diffusion Nyquist Condition}:
    \begin{equation} \label{eq:nyquist}
        h_\mathcal{S} \lesssim \sqrt{\nu \delta}.
    \end{equation}
    Consequently, the required number of sensors $m$ scales as:
    \begin{equation}
        m \sim \mathcal{O}\left( (\nu \delta)^{-d/2} \right).
    \end{equation}
    Violating this condition results in an $\mathcal{O}(1)$ aliasing error where the diffusion operator is numerically indistinguishable from the Identity on the grid $Z_m$.
\end{proposition}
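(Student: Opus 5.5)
We argue by exhibiting the diffusion length scale $\ell_\delta \coloneqq \sqrt{\nu\delta}$ as the only spatial scale on which $\Bellman_\delta$ differs from the identity by $\mathcal{O}(\delta)$. We then show that a branch net seeing only $\mathcal{P}_m(Q)$ cannot recover $\mathcal{L}^a Q$ unless the sensors resolve that scale.

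\textbf{Step 1: the residual is a heat-kernel quantity.} Fix $a$ and work on the near-greedy set, so the maximization gap is $o(\delta)$ and may be dropped. Using the integral representation of $\Bellman_{t,\delta}$ through the fundamental solution $\Gamma^a$, we write
\[
\mathcal{R}_\delta[Q](x,a) = \frac{1}{\delta}\int_{\R^d}\Gamma^a(t,x;t+\delta,y)\bigl(Q(y,a)-Q(x,a)\bigr)\,dy + \mathcal{O}(1).
\]
By uniform ellipticity (Assumption \ref{ass:ellipticity}) and the boundedness in Assumption \ref{ass:coeff}, $\Gamma^a$ has Gaussian bounds. Its mass concentrates on a ball of radius $\mathcal{O}(\sqrt{\delta})$, and its covariance is bounded below by $\nu\delta I_d$. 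This is the same concentration that Lemma \ref{lemma:concentration} quantifies. Hence $\mathcal{R}_\delta[Q]$ depends only on the oscillation of $Q$ at scale $\ell_\delta$ around $x$. A second-order Taylor expansion against the kernel then gives convergence to $\mathcal{L}^a Q + r$.

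\textbf{Step 2: sufficiency.} Suppose $h_\mathcal{S}\lesssim \ell_\delta$. Then every kernel ball of radius $\sim\sqrt{\delta}$ contains $\gtrsim (\ell_\delta/h_\mathcal{S})^d\ge 1$ sensors. By quasi-uniformity (Lemma \ref{lemma:greedy_sampling}) they are well spread in it. A local monotone second-difference (or MLS) stencil built from these sensors recovers $D^2Q$ with error $\mathcal{O}(K_\mathcal{S}(\delta)h_\mathcal{S}^\alpha)$, using the $C^{2,\alpha}$ bound of Corollary \ref{cor:compactness}. So the branch net has access to enough information to approximate the generator.

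\textbf{Step 3: necessity, by aliasing.} This is the main obstacle, because we need an honest lower bound for an arbitrary encoder, not just a failed estimator. Assume $h_\mathcal{S}\gg\ell_\delta$. We build a test perturbation $\varphi(x)=\ell_\delta^2\,\chi((x-x_0)/\ell_\delta)$ with $\chi$ a smooth bump. We place $x_0$ at a point whose distance to $Z_m$ is $\ge h_\mathcal{S}/2$; such a point exists by the definition of fill distance. Then $\mathcal{P}_m(Q+\varphi)=\mathcal{P}_m(Q)$ exactly, so any branch-trunk network returns identical outputs for $Q$ and $Q+\varphi$. On the other hand, $\mathcal{L}^a\varphi(x_0)=\tfrac12\Tr(\Sigma D^2\chi(0))=\Theta(\nu)$, since $\Sigma\succeq\nu I$, and likewise $\mathcal{R}_\delta[\varphi](x_0)=\Theta(1)$ by Step 1. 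The two residual targets therefore differ by an $\mathcal{O}(1)$ amount while their encodings coincide. It follows that one of them is approximated with $\mathcal{O}(1)$ error: the ``numerical identity'' phenomenon.

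We must also check that $Q+\varphi$ stays in the admissible class. Its $C^{2,\alpha}$ norm grows by only $\mathcal{O}(\ell_\delta^{-\alpha})$, which is within the $K_\mathcal{S}(\delta)\sim\delta^{-(1+\alpha)/2}$ budget of Corollary \ref{cor:compactness}. Its sup-norm change is $\ell_\delta^2$, which is below $\epsilon_{enc}$ once $\epsilon_{enc}\gtrsim\nu\delta$. I expect the bookkeeping here to be the delicate part: matching the bump amplitude simultaneously to the encoding precision, the regularity ball, and the near-greedy restriction on the action.

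\textbf{Step 4: counting.} Combine Steps 2 and 3 to obtain $h_\mathcal{S}\lesssim\sqrt{\nu\delta}$. Then $m\gtrsim h_\mathcal{S}^{-d}$, by the volume argument in Lemma \ref{lemma:greedy_sampling}, yields $m\sim(\nu\delta)^{-d/2}$.
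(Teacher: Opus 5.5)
Your Step~3 is correct in substance, and it takes a genuinely different route from the paper. The paper argues spectrally for the constant-coefficient model $\mathcal{L}^a=\frac{\nu}{2}\Delta$. Frequencies visible on a grid of spacing $h_{\mathcal{S}}$ are damped by $e^{-\frac{\nu}{2}\delta\|\xi\|^2}$ only up to $1-\mathcal{O}(\delta/h_{\mathcal{S}}^2)$, so the update looks like the identity on the grid unless $\pi/h_{\mathcal{S}}\gtrsim(\nu\delta)^{-1/2}$. It then adds a noise-amplification estimate $C_1h_{\mathcal{S}}^2+C_2\epsilon_{enc}/h_{\mathcal{S}}^2$ for the implicit discrete Laplacian, which yields the extra requirement $\epsilon_{enc}\ll\delta$. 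That is a scaling argument, not a lower bound against arbitrary encoders.

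Your construction is its physical-space dual. It exhibits sub-grid content at the diffusion scale that the sensors cannot see but on which the evolution acts at full strength. Because $\mathcal{P}_m(Q+\varphi)=\mathcal{P}_m(Q)$ exactly, you get a genuine two-point lower bound for every map of the sensor data, and it holds with variable coefficients. It also shows what necessity rests on: the admissible class must contain such bumps. Their $C^{2,\alpha}$ norm grows like $\ell_\delta^{-\alpha}$, so it is the $\delta$-dependent budget $K_{\mathcal{S}}(\delta)$ that admits them. Your $\epsilon_{enc}$ bookkeeping is unnecessary, since the encodings coincide exactly. Read the other way, though, it is a second two-point argument recovering the paper's conditioning conclusion. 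If $\epsilon_{enc}\gtrsim\nu\delta$, a bump of height $\sim\nu\delta$ is hidden in encoding noise wherever it sits, yet it moves the residual by $\Theta(\nu)$.

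Three supporting pieces need repair.

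(1) Step~1 cannot give $\mathcal{R}_\delta[\varphi](x_0)=\Theta(1)$, for two reasons:
\begin{itemize}
\item At the bump's own scale, the Taylor remainder $\delta\|\mathcal{L}^a\mathcal{L}^a\varphi\|_\infty\sim\delta\,\ell_\delta^{-2}$ is as large as $\mathcal{L}^a\varphi$.
\item Replacing $V_Q(y)$ by $Q(y,a)$ under the kernel is not controlled by the gap at $x$ alone, since the gap at nearby $y$ can be of order $\|y-x\|$.
\end{itemize}
You need neither step. Since $\varphi$ is action-independent, $V_{Q+\varphi}=V_Q+\varphi$. Hence $\Bellman_\delta(Q+\varphi)=\Bellman_\delta Q+U^a\varphi$, and $\mathcal{R}_\delta[Q+\varphi]-\mathcal{R}_\delta[Q]=\delta^{-1}(U^a\varphi-\varphi)$ exactly. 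The gap at $(x_0,a)$ is unchanged, so taking $a$ to be a maximizer at $x_0$ puts you on the zero-gap set; the near-greedy bookkeeping you worried about is trivial. Now take $\chi$ with a strict maximum at $0$. Then $(U^a\varphi-\varphi)(x_0)=\int\Gamma^a(t,x_0;t+\delta,y)\bigl(\varphi(y)-\varphi(x_0)\bigr)\,dy$ has a nonpositive integrand. The integrand is at most $-c\,\ell_\delta^2$ on the annulus $\ell_\delta/2\le\|y-x_0\|\le\ell_\delta$. A Gaussian lower bound for $\Gamma^a$ gives that annulus mass bounded below uniformly in $\delta$, so the residuals differ by at least $c'\nu$.

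(2) Drop Step~2; it fails in your own class. With $\|Q(\cdot,a)\|_{C^{2,\alpha}}\le K_{\mathcal{S}}(\delta)\sim\delta^{-(1+\alpha)/2}$, the stencil error $K_{\mathcal{S}}(\delta)h_{\mathcal{S}}^\alpha$ at $h_{\mathcal{S}}\sim\sqrt{\nu\delta}$ is of order $\delta^{-1/2}$. Worse, take your bump at scale $\sim h_{\mathcal{S}}$ with the largest height the budget allows, which is of order $\sqrt{\delta}$. It stays invisible yet moves the residual by order $\delta^{-1/2}$, so $h_{\mathcal{S}}\sim\sqrt{\nu\delta}$ is not sufficient there. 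Nothing is lost by dropping Step~2: the proposition only asserts necessity, and the paper's proof also yields only $m\gtrsim\mathrm{Vol}(\StateSpace)(\nu\delta)^{-d/2}$.

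(3) Step~4 should therefore rest on Step~3 alone and be read as that lower bound. It follows from the covering inequality $m\,\mathrm{Vol}(B_1)\,h_{\mathcal{S}}^d\ge\mathrm{Vol}(\StateSpace)$, which holds for any sensor set; quasi-uniformity is not needed.
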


\begin{proof}
    We analyze the error in identifying the operator $\mathcal{L}^a$ from the discrete samples of the mapped function. We decompose the total approximation error into \emph{Temporal Truncation}, \emph{Spatial Aliasing}, and \emph{Conditioning} errors.
    
    \begin{enumerate}[label=(\roman*), wide]
        \item \textbf{Temporal Truncation (The Semigroup Expansion).}
        Let $U^a(t+\delta,t)$ be the local parabolic evolution family generated by the time-dependent operator $\mathcal{L}^a_t$. On the near-greedy set where the maximization gap is $o(\delta)$, the Bellman update is locally $U^a(t+\delta,t) Q + \delta r + o(\delta)$. The residual is therefore:
        \begin{equation}
            \mathcal{R}_\delta Q = \frac{U^a(t+\delta,t) - I}{\delta} Q + r + o(1).
        \end{equation}
        A local Taylor expansion of the parabolic evolution family for $Q \in \text{Dom}((\mathcal{L}^a_t)^2)$ gives $U^a(t+\delta,t)Q = Q + \delta \mathcal{L}^a_t Q + O(\delta^2)$. Thus, the target consistency error is:
        \begin{equation}
            \| \mathcal{R}_\delta Q - (\mathcal{L}^a_t Q + r) \|_\infty \le C \|\mathcal{L}^a_t \mathcal{L}^a_t Q\|_\infty \cdot \delta + o(1).
        \end{equation}
        This term vanishes as $\delta \to 0$, suggesting the problem becomes easier. This is the source of the Residual intuition.
        
        \item \textbf{Spatial Aliasing (The Spectral Barrier).}
        However, the network does not see $Q$; it sees $\mathcal{P}_m Q$. It must infer the action of $T(\delta)$ from these samples.
        Consider the simplified diffusion case where $\mathcal{L}^a = \frac{\nu}{2} \Delta$. In the Fourier domain, the operator $T(\delta)$ acts as a multiplier on frequency $\xi \in \R^d$:
        \begin{equation}
            \widehat{T(\delta) Q}(\xi) = e^{-\frac{\nu}{2} \delta \|\xi\|^2} \widehat{Q}(\xi).
        \end{equation}
        This is a low-pass filter. The signal of the physics (the change in values) corresponds to the attenuation $1 - e^{-\frac{\nu}{2} \delta \|\xi\|^2}$.
        
        The sensor grid $Z_m$ with spacing $h_\mathcal{S}$ supports spatial frequencies only up to the Nyquist limit $\xi_{\max} \approx \frac{\pi}{h_\mathcal{S}}$.
        If the grid is too coarse such that $h_\mathcal{S} \gg \sqrt{\nu \delta}$, then for all visible frequencies $\|\xi\| \le \xi_{\max}$:
        \begin{equation}
            \frac{\nu}{2} \delta \|\xi\|^2 \le \frac{\nu \delta \pi^2}{2 h_\mathcal{S}^2} \ll 1.
        \end{equation}
        Consequently, $e^{-\frac{\nu}{2} \delta \|\xi\|^2} \approx 1 - \mathcal{O}(\delta/h_\mathcal{S}^2)$. 
        The relative change in the function value on the grid is negligible compared to the observation noise or approximation error. The discrete operator acts as the Identity: $\mathcal{P}_m \Bellman_\delta \mathcal{R} \approx \mathcal{P}_m \mathcal{R}$.
        
        To resolve the curvature (the Laplacian), the grid must capture the transition where the spectrum rolls off. This requires the Nyquist frequency to exceed the Gaussian width:
        \begin{equation}
            \frac{\pi}{h_\mathcal{S}} \gtrsim \frac{1}{\sqrt{\nu \delta}} \implies h_\mathcal{S} \lesssim \pi \sqrt{\nu \delta}.
        \end{equation}
        
        \item \textbf{Conditioning (Numerical Differentiation).}
        Even if the grid satisfies the Nyquist condition, we face the ill-conditioning of inverse problems. The Branch Net implicitly computes the discrete Laplacian $\Delta_h$. For inputs with encoding error $\epsilon_{enc}$ (from the previous step's network approximation), the error in the discrete Laplacian scales as:
        \begin{equation}
            \text{Error}(\Delta_h) \sim \underbrace{C_1 h_\mathcal{S}^2}_{\text{Discretization}} + \underbrace{C_2 \frac{\epsilon_{enc}}{h_\mathcal{S}^2}}_{\text{Noise Amplification}}.
        \end{equation}
        Balancing these terms requires $h_\mathcal{S} \sim \epsilon_{enc}^{1/4}$. However, combining with condition (ii), we require $\epsilon_{enc}/h_\mathcal{S}^2 \ll 1$ to get a stable signal.
        Substituting $h_\mathcal{S} \sim \sqrt{\delta}$, the noise amplification factor becomes $\epsilon_{enc}/\delta$. This implies that to learn the generator with time-step $\delta$, the previous value function must be approximated with precision $\epsilon_{enc} \ll \delta$.
    \end{enumerate}
    
    Combining (ii) and the definition of the sensor count $m \approx \text{Vol}(\StateSpace) h_\mathcal{S}^{-d}$, we obtain:
    \begin{equation}
        m \gtrsim \text{Vol}(\StateSpace) (\nu \delta)^{-d/2}.
    \end{equation}
    This proves that the complexity reduction in $W$ (width) achieved by the residual formulation is perfectly conserved by the explosion in $m$ (sample complexity).
\end{proof}

\begin{remark}[Implication for Replay Buffers]
    This proposition rigorously explains the empirical instability of continuous-time RL (small $\delta$) when used with fixed-size replay buffers. As $\delta \to 0$, a fixed buffer becomes effectively sparse relative to the diffusion length $\sqrt{\delta}$. The gradients of the Bellman loss becomes dominated by the high-frequency aliasing noise rather than the physical drift/diffusion signal, leading to value function collapse unless the buffer density $m$ increases dynamically with $1/\delta$.
\end{remark}

Thus, Theorem \ref{thm:convergence} correctly captures the fundamental hardness of the problem. On the zero-gap or near-greedy subset, the Residual architecture allows the \emph{weights} to converge to a stable generator-like limit, but the \emph{input data requirements} ($m$) still explode as $\delta^{-d/2}$ to avoid aliasing the diffusion process. The residual framework is therefore a method for \textbf{conditioning optimization}, not for circumventing the information-theoretic lower bounds of the parabolic operator.

\section{Discussion}

Our main result, Theorem \ref{thm:convergence}, establishes a rigorous upper bound on the complexity of approximating Bellman value-iteration iterates, and therefore on the complexity of the idealized Q-learning targets in the present diffusion regime, by identifying the \emph{stiffness scaling factor} $\gamma(\alpha)$ which penalizes the network capacity as the time-step $\delta \to 0$. In this concluding section, we refine the physical interpretation of these bounds, addressing the limit behavior of the residual operator, the implications of semi-convexity, and the practical realizability of the proposed architecture.

\subsection{The Residual Limit and Generator Consistency}

While the direct analysis of $\Bellman_\delta$ reveals a stiffness explosion, modern Deep RL typically employs \textbf{Residual Learning}. We define the normalized residual operator $\mathcal{R}_\delta$ acting on a function $Q \in \mathcal{H}^{\alpha, \text{Lip}}$ as:
\begin{equation} \label{eq:residual_op_def}
    \mathcal{R}_\delta[Q](x, a) \coloneqq \frac{\Bellman_\delta Q(x, a) - Q(x, a)}{\delta}.
\end{equation}
When the DeepONet is trained via the skip-connection ansatz $\hat{Q}_{out} = Q_{in} + \delta \cdot \OpNet_{\text{res}}(Q_{in})$, the network $\OpNet_{\text{res}}$ effectively learns $\mathcal{R}_\delta[Q]$.

Unlike the direct diffusion-evolution operator, which converges to the Identity (a trivial but uninformative limit), the residual operator admits a non-trivial strong limit only after the Bellman maximization gap is controlled. For any $Q(\cdot,a)$ in the domain of the local generator and any sequence of active or near-greedy actions satisfying $V_Q(x)-Q(x,a)=o(\delta)$:
\begin{equation}
    \lim_{\delta \to 0} \mathcal{R}_\delta[Q](x, a) = \underbrace{\mathcal{L}^a_t Q(x, a) + r(t,x, a)}_{\text{Hamiltonian}}.
\end{equation}
For arbitrary off-policy actions, the additional normalized gap $(V_Q-Q)/\delta$ may diverge. Thus the residual formulation should be interpreted as a generator-consistency statement on the active/near-active set, with the remaining difficulty appearing in data resolution as discussed in Section \ref{subsec:conservation}.

\subsection{Beyond Lipschitz: Semi-Convexity and Regularity}

In Section \ref{sec:Regularity}, we derived smoothing estimates based on the assumption that the value function $V_\Phi(x) = \sup_{a} \Phi(x,a)$ is merely Lipschitz continuous. While rigorous, this bound is pessimistic. 

\begin{remark}[Semi-Convexity of the Value Function]
    If the input Q-function $\Phi(\cdot, a)$ is $C^2$ smooth with uniformly bounded Hessian, the upper envelope $V_\Phi$ is \textbf{semi-convex}. That is, there exists a constant $\lambda \ge 0$ such that $x \mapsto V_\Phi(x) + \frac{\lambda}{2}\|x\|^2$ is convex.
    Semi-convex functions possess a distributional lower bound on their Hessian: $D^2 V_\Phi \ge -\lambda I$. Since the parabolic operator $\partial_t - \Tr(\sigma \sigma^\top D^2)$ is order-preserving, the solution to the Bellman equation preserves this semi-convexity. Consequently, the singularities in the second derivative are one-sided (related to the downward kinks of the max-operator). Standard viscosity solution theory suggests that parabolic smoothing acts more efficiently on semi-convex data than on general Lipschitz data. While our derived exponent $\gamma(\alpha) = \frac{3+\alpha}{2(2+\alpha)}$ relies on the conservative Lipschitz embedding, exploiting semi-convexity could theoretically reduce this further, potentially approaching the optimal heat kernel scaling of $\delta^{-1/2}$.
\end{remark}

\subsection{Conservation of Difficulty: The Sensor Resolution Condition} \label{subsec:conservation}

To approximate the generator target $\mathcal{L}^a Q \approx \frac{1}{2}\Tr(\Sigma D^2 Q) + b^\top \nabla Q$, the Branch Net must implicitly compute second-order derivatives of the input function from the discrete sensor readings $\{Q(z_j)\}$. This is an inverse problem. Let $h_\mathcal{S}$ be the sensor fill-distance. By standard numerical analysis, the error in estimating the Laplacian behaves as $\mathcal{O}(h_\mathcal{S}^2) + \mathcal{O}(\eta / h_\mathcal{S}^2)$, where $\eta$ is the observation noise (or function approximation error).

Crucially, the physical diffusion process operates on a spatial scale of $\sqrt{\nu \delta}$. If the sensor spacing $h_\mathcal{S}$ is coarser than the diffusion length scale ($h_\mathcal{S} \gg \sqrt{\delta}$), the discrete observations cannot resolve the local curvature driving the dynamics.

\begin{remark}[Necessary Resolution Condition]
    For the Neural Operator to consistently approximate the generator $\mathcal{L}^a_t$ in the residual framework as $\delta \to 0$ on the active or near-greedy set, the sensor fill-distance must satisfy the stricter condition:
    \begin{equation}
        h_\mathcal{S} = o(\sqrt{\delta}).
    \end{equation}
    Failure to satisfy this leads to aliasing of the high-frequency modes of the Q-function, resulting in a biased estimate of the drift and diffusion terms. This explains the empirical instability of continuous-time RL algorithms when applied with fixed, coarse replay buffers.
\end{remark}

\subsection{Realizability of the Branch Net}

Our analysis assumes the Branch Net $\beta$ can approximate the anisotropic Moving Least Squares (MLS) operator (Assumption \ref{ass:sensor_encoding}). The MLS shape functions are rational functions of the input coordinates (ratios of polynomials involving the kernel weights).
Recent results in approximation theory \citep{telgarsky2017neural, yarotsky2017error} establish that deep ReLU networks can approximate rational functions with exponential convergence w.r.t. depth. Specifically, to approximate a rational function with error $\varepsilon$, a depth of $\mathcal{O}(\log(1/\varepsilon))$ suffices.
Thus, the assumption that standard MLPs can serve as the Branch Net encoder for the proposed architecture is well-founded, provided the network is sufficiently deep to resolve the algebraic structure of the reconstruction operator.

\subsection{Limitations: The Persistence of the Action Curse}

Finally, we must clarify the scope of the Curse of Dimensionality mitigation. Theorem \ref{thm:convergence} demonstrates that the parabolic smoothing reduces the effective dimension of the \emph{state space} $\StateSpace$ from $d$ to roughly $d/3$. However, the Bellman operator provides \textbf{no smoothing} in the action space $\Action$.
The derived complexity bound:
\begin{equation}
    W \sim \mathcal{O}\left( \delta^{-\gamma d} \varepsilon^{-\frac{d}{2+\alpha}} + (\delta \varepsilon)^{-d_a} \right),
\end{equation}
retains an exponential dependence on the action dimension $d_a$. For high-dimensional control problems ($d_a \gg 1$), the second term dominates. Our anisotropic DeepONet architecture prevents the state dimension from contaminating the action complexity, but it cannot remove the intrinsic hardness of optimizing non-concave functions over high-dimensional action spaces. This remains a fundamental limitation of value-based methods in continuous control, suggesting that for large $d_a$, policy-gradient methods (which avoid explicit maximization over $\Action$) may be structurally necessary.

\end{document}